	\useunder{\uline}{\ul}{}
\begin{document}
		
		\title{Balanced Multi-view Clustering}

		\author{Zhenglai Li,
			Jun Wang,
			Chang Tang,
			Xinzhong Zhu,
			Wei Zhang,
			Xinwang Liu
			
			\IEEEcompsocitemizethanks{
				
				\IEEEcompsocthanksitem The work was supported in part by the National Natural Science Foundation of China under grant 62476258 and 62325604, and in part by Taishan Scholar Program of Shandong Province in China under
Grant TSQN202312230.
				\IEEEcompsocthanksitem Z. Li is with the faculty of data science, City University of Macau, Macau 999078, China.
				\protect\\
				E-mail: zlli@cityu.edu.mo.
				\IEEEcompsocthanksitem J. Wang and X. Liu are with the school of computer, National University of Defense Technology, Changsha 410073, China.
				\protect\\
				E-mail: \{wang\_jun, xinwangliu\}@nudt.edu.cn.
				\IEEEcompsocthanksitem C. Tang is with the school of computer, China University of Geosciences, Wuhan 430074, China.
				\protect\\
				E-mail: tangchang@cug.edu.cn.
				\IEEEcompsocthanksitem X. Zhu is with College of Mathematics, Physics and Information Engineering, Zhejiang Normal University, Jinhua, China, and also with the Research Institute of Ningbo Cixing Co. Ltd, Ningbo, China. 
				\protect\\
				E-mail: zxz@zjnu.edu.cn.
				\IEEEcompsocthanksitem W. Zhang is with the Key Laboratory of Computing Power Network and Information Security, Ministry of Education, Shandong Computer Science Center (National Supercomputer Center in Jinan), Qilu University of Technology (Shandong Academy of Sciences), and is also with Shandong Provincial Key Laboratory of Computing Power Internet and Service Computing, Shandong Fundamental Research Center for Computer Science, Jinan 250000, China.
				\protect\\
				E-mail: wzhang@qlu.edu.cn.}
			\thanks{Manuscript received April 19, 2005; revised August 26, 2015. (Corresponding author: Chang Tang)}
		}
		
		\markboth{Journal of \LaTeX\ Class Files,~Vol.~14, No.~8, August~2015}%
		{Shell \MakeLowercase{\textit{et al.}}: Bare Advanced Demo of IEEEtran.cls for IEEE Computer Society Journals}
		
		\IEEEtitleabstractindextext{%
			\begin{abstract}
				\justifying
				Multi-view clustering (MvC) aims to integrate information from different views to enhance the capability of the model in capturing the underlying data structures.  The widely used joint training paradigm in MvC is potentially not fully leverage the multi-view information, since the imbalanced and under-optimized view-specific features caused by the uniform learning objective for all views.  For instance, particular views with more discriminative information could dominate the learning process in the joint training paradigm, leading to other views being under-optimized.  To alleviate this issue, we first analyze the imbalanced phenomenon in the joint-training paradigm of multi-view clustering from the perspective of gradient descent for each view-specific feature extractor.  Then, we propose a novel balanced multi-view clustering (BMvC) method, which introduces a view-specific contrastive regularization (VCR) to modulate the optimization of each view.  Concretely, VCR preserves the sample similarities captured from the joint features and view-specific ones into the clustering distributions corresponding to view-specific features to enhance the learning process of view-specific feature extractors.  Additionally, a theoretical analysis is provided to illustrate that VCR adaptively modulates the magnitudes of gradients for updating the parameters of view-specific feature extractors to achieve a balanced multi-view learning procedure.  In such a manner, BMvC achieves a better trade-off between the exploitation of view-specific patterns and the exploration of view-invariance patterns to fully learn the multi-view information for the clustering task.  Finally, a set of experiments are conducted to verify the superiority of the proposed method compared with state-of-the-art approaches on eight benchmark MvC datasets. The demo code of this work is publicly available at \url{https://github.com/guanyuezhen/BMvC}.
			\end{abstract}
			
			\begin{IEEEkeywords}
				Multi-view Clustering, Multi-view Learning, View Cooperation, Contrastive Learning, Balanced Learning.
		\end{IEEEkeywords}}

		\maketitle

		\IEEEdisplaynontitleabstractindextext

		%
		\IEEEpeerreviewmaketitle

		\ifCLASSOPTIONcompsoc
		\IEEEraisesectionheading{\section{Introduction}\label{sec:introduction}}
		\else
		\section{Introduction}
		\label{sec:introduction}
		\fi

		%
		%
		%
		%
		\IEEEPARstart{I}n real-world applications, data are often collected from multiple sensors or depicted by diverse feature descriptors. For example, the information captured from different sensors (e.g., cameras, LiDARs, Radars) is fused to achieve comprehensive scene understanding for autonomous vehicles~\cite{caesar2020nuscenes}. The text, visual, and audio are jointly utilized in the video process~\cite{li2024mvbench}. The traditional shallow features descriptors, e.g., histogram of oriented gradients (HOG)~\cite{dalal2005histograms}, and deep features descriptors, e.g., transformer~\cite{han2022survey}, are both leveraged to depict the objects in some computer vision tasks~\cite{wei2022masked}. Accordingly, multi-view learning methods, which explore the complementary information from diverse views to improve the model performance, have witnessed great progress in these years~\cite{zhou2024survey, zou2023dpnet, zou2024look}. 

		\begin{figure}[t]
			\centering
			\includegraphics[width = \linewidth]{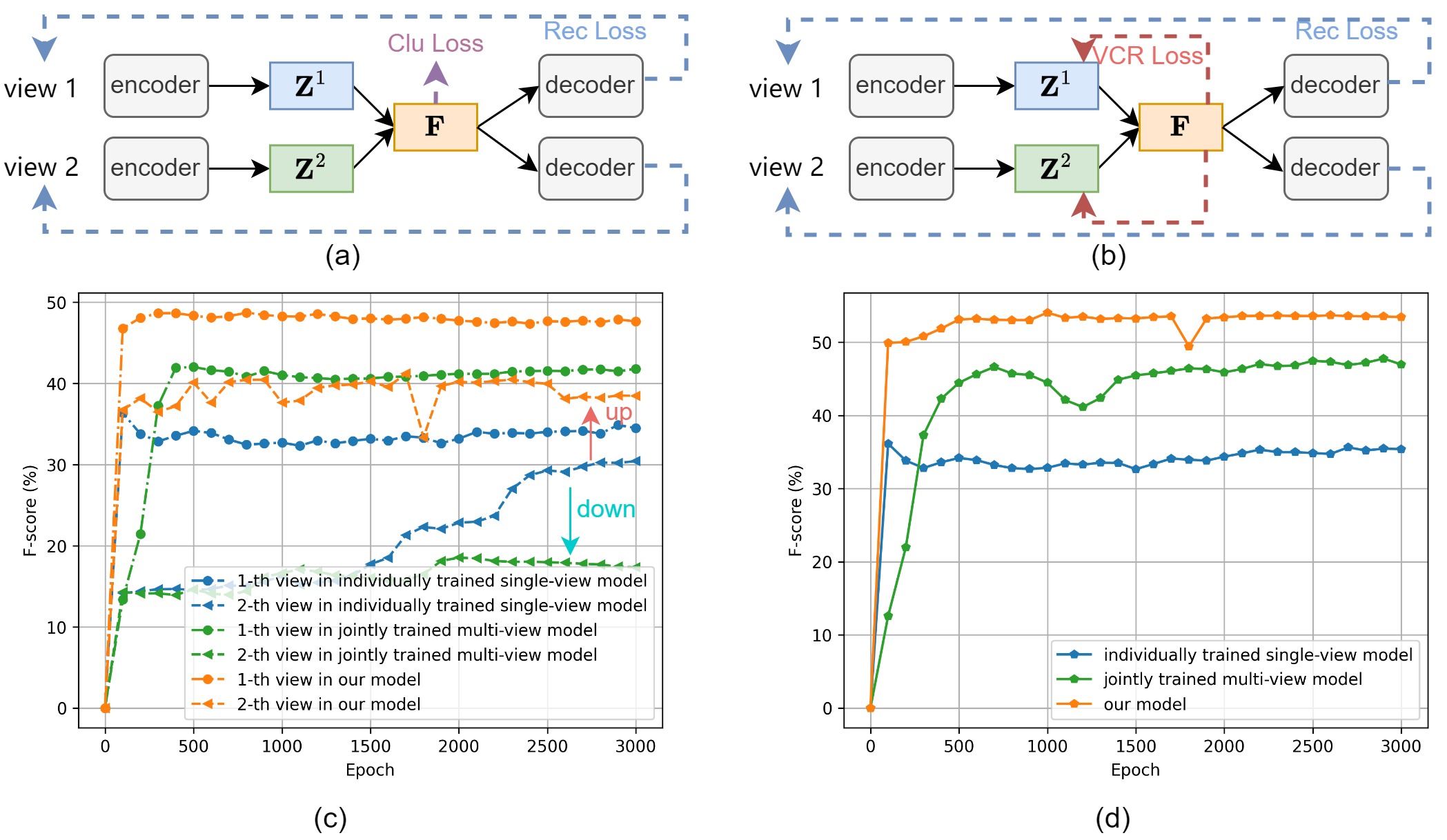}
			\caption{Illustration of imbalanced multi-view clustering issue in a two-view case. (a) The joint training multi-view clustering paradigm. (b) Our proposed balanced multi-view clustering approach. (c) The clustering performance of view-specific features extracted by diverse models. (d) The clustering performance of joint representation obtained by different models.}
			\label{fig:intro}
		\end{figure}
		
		Clustering algorithms aim to group the data points into their respective clusters without labels. As a useful unsupervised learning tool to reveal the underlying semantic structure of data for capturing meaningful information, clustering algorithms have been widely studied and employed in various fields, such as social network~\cite{block2020social}, image segmentation~\cite{liang2024clusterfomer}, and single-cell RNA-seq data clustering~\cite{kiselev2019challenges}. As one of the kinds of multi-view learning approaches, multi-view clustering expects to learn a shared and comprehensive clustering presentation by utilizing the complementary information from diverse views, to boost the clustering performance, which has received much attention recently~\cite{fang2023comprehensive}.
		
		Existing multi-view clustering approaches usually employ a joint training paradigm to optimize the clustering models as shown in Fig.~\ref{fig:intro} (a), which integrates the view-specific features into a shared representation under a uniform joint training objective, i.e., reconstructing the features (Rec Loss) or clustering (Clu Loss) by the shared representation, to extract the comprehensive information for the clustering task. For example, Xie et al.~\cite{xie2020joint} designed a joint learning framework, which learns diverse feature representations for images via different neural networks and fuses the information to obtain the clustering results. The work in~\cite{trosten2023effects}, designs a framework to investigate the effects of self-supervision and contrastive alignment in deep multi-view clustering. In recent studies, some researchers claimed that the clustering performance of the multi-view model may degrade when the number of views increases~\cite{xu2024investigating, wang2024multiple}. The main reason is that the particular views with noise information could be not only useless but also even detrimental for the clustering task~\cite{xu2024investigating}. This discrepancy makes it challenging for the multi-view clustering model to effectively learn from all views simultaneously under a uniform joint training objective. To alleviate this issue, previous methods usually select some informative views for clustering~\cite{wang2024multiple, wangevaluate}, but it also leads to insufficient utilization of the complementary information among all views. 
		
		Beyond the clustering performance degradation caused by diverse view qualities, we observe that even when the multi-view clustering models perform better than their single-view counterparts, they still fail to fully explore the potential of multiple views. As shown in Fig.~\ref{fig:intro}, a set of experiments are conducted on the nuswide dataset~\cite{zhen2019deep} to evaluate the performance of diverse multi-view clustering settings. From the results, we note that the joint training multi-view clustering models outperform than single-view ones, but the clustering performance of view-specific features within the jointly-trained multi-view clustering models performs similarly even worse than the performance of view-specific features captured from solely trained single-view clustering models. For example, in Fig.~\ref{fig:intro} (c), $1$-th view features in the jointly-trained multi-view clustering model exists a clear clustering performance drop, compared with that in the single-view model. Such observations indicate that the view-specific feature extractors are under-optimized with an imbalanced degree in the joint training multi-view clustering model. The main reason is that the multi-view datasets often contain some views with more discriminative information, which tend to be favored and dominant in the training procedure, consequently suppressing the learning process of other views. Such view preference within the datasets causes the observed imbalanced multi-view clustering issue among different views.
		
		Although many solid MVC methods have been proposed~\cite{fang2023comprehensive, zhou2024survey}, the efforts to tackle the imbalance multi-view clustering issue are still limited. To this end, we propose BMvC (short for balanced multi-view clustering) as illustrated in Fig.~\ref{fig:intro} (b), a novel solution that introduces a view-specific contrastive regularization (VCR) to modulate the optimization of each view. Concretely, VCR constrains the clustering distributions of view-specific features to preserve the sample similarities captured from the joint features and view-specific ones. As a result, the VCR provides an additional gradient for updating view-specific feature extractors so that a better trade-off between the exploitation of view-specific patterns and exploration of view-invariance patterns can be achieved to fully learn the multi-view information for the clustering task. As shown in Fig.~\ref{fig:intro} (c), the clustering performance of view-specific features extracted by our proposed method outperforms better than that of joint training and single-view training models. Finally, the proposed method achieves better results compared with the joint training model as given in Fig.~\ref{fig:intro} (d).
		The contributions are summarized as follows,
		\begin{itemize}
			\item We observe and analyze the imbalanced learning phenomenon in multi-view clustering from the gradient descent perspective that view-specific encoders in joint training paradigm are imbalanced under-optimized and certain views could be worse optimized than others in the training procedure.
			\item We propose BMvC to achieve a better trade-off between the exploitation of view-specific patterns and the exploration of view-invariance patterns to fully learn the multi-view information for the clustering task.
			\item We introduce VCR to exploit the sample similarities captured from the joint features and view-specific ones to regularize the clustering distributions of view-specific features being structural so that the view-specific encoders are learned balanced. A theoretical analysis is formulated to demonstrate that VCR adaptively modulates the gradients for updating the parameters of view-specific encoders to achieve a balanced learning processing.
			\item We perform extensive experiments on eight benchmark multi-view datasets to verify the effectiveness of the proposed method.
		\end{itemize}
		
		The rest of this paper is organized as follows. Section 2 gives a brief review of the most related work.  In section 3, we present the details of the proposed BMvC method. Section 4 provides a series of experimental results and discussions. In Section 5, we provide a conclusion of this paper.
		
		\section{Related Work}
		
		\subsection{Multi-view Clustering}
		Multi-view clustering, which aggregates comprehensive information from various views, has attracted considerable attention due to the increasing amount of multi-view data. Current methods can be classified into three primary categories based on their approach to generating clustering results: matrix factorization-based methods~\cite{wang2018multiview, yang2020uniform, zhao2017multi}, graph-based methods~\cite{zhang2018generalized, kang2020large, zhang2020consensus, wang2019gmc, lin2021graph, li2015large}, and deep learning-based methods~\cite{xu2021deep, wang2018partial, xie2020joint, wangevaluate}.
		
		Matrix factorization-based methods aim to derive a shared feature representation from multiple views through matrix factorization techniques. For instance, Liu et al.~\cite{liu2013multi} developed a shared factorization that provides compatible solutions across diverse views to achieve clustering outcomes. Building upon the deep matrix factorization methods~\cite{trigeorgis2016deep}, Zhao et al.~\cite{zhao2017multi} hierarchically decomposed the multi-view representations into a shared latent feature space, progressively learning complementary information from the various views. They employed the Laplacian regularization to maintain the locality properties of the data within this latent space. Yang et al.~\cite{yang2020uniform} introduced a tri-factorization-based non-negative matrix factorization approach to decompose multi-view data into a uniform distribution, thereby enhancing the separability of the learned consensus representation. To address the challenges posed by the incomplete multi-view data, Wen et al.~\cite{wen2023graph} implemented an adaptive feature weighting constraint aimed at mitigating the effects of redundant and noisy features during the matrix factorization process. Additionally, they designed a graph-embedded consensus representation learning term to preserve the structural information inherent in incomplete multi-view data.
		
		Graphs are widely used data structures for representing the relationships among different samples. Graph-based multi-view clustering methods integrate graph similarities learned from various views, ultimately deriving clustering results through spectral clustering techniques. Self-representation~\cite{elhamifar2013sparse} and adaptive neighbor graph learning~\cite{nie2014clustering} are two prevalent approaches for constructing high-quality similarity graphs in prior works. For instance, Zhang et al.~\cite{zhang2018generalized} jointly learned both latent and subspace representations, employing neural networks to enhance the latent representation learning process and improve model generalization. Nie et al.~\cite{nie2017multi} extended adaptive neighbor graph learning to a multi-view setting and captured a Laplacian rank-constrained consensus graph from multiple views for the clustering purpose. Tang et al.~\cite{tang2022unified} unified $k$-means and spectral clustering to leverage information from both graphs and embedding matrices, thereby enhancing clustering performance. In~\cite{qiang2021fast}, a joint learning framework that simultaneously learns the graph embedding matrix and clustering indicators, is developed to effectively unify the spectral embedding and spectral rotation manners. Additionally, they further employed an anchor graph to reduce the computational complexity of the model.
		
		Deep learning-based methods leverage the powerful representation capabilities of deep neural networks to derive consensus clustering results from multi-view data. Wang et al.~\cite{wang2018partial} utilized a consistent generative adversarial network to capture a shared representation from incomplete multi-view data for clustering. Xie et al.~\cite{xie2020joint} developed a joint learning framework that employs different neural networks to extract features for comprehensively depicting each image. This framework enables simultaneous feature embedding, multi-view information fusion, and data clustering to achieve joint training. Xu et al.~\cite{xu2021deep} introduced a collaborative training scheme to capture complementary information from diverse views, facilitating the joint learning of feature representations and cluster assignments. In recent years, self-supervised learning with contrastive loss has made significant strides and received rapid developments across various fields~\cite{krishnan2022self, liu2021self}. Drawing inspiration from the robust feature learning capabilities of self-supervised learning, contrastive loss has been extensively applied in multi-view clustering. For example, Xu et al~\cite{xu2022multi} aligned multi-view information from both high-level semantics and low-level features through contrastive learning, effectively capturing the common semantics for clustering. Lin et al.~\cite{lin2021completer} integrated feature representation learning and missing sample recovery into a unified framework, using contrastive learning to capture informative and consistent representations from different views. Additionally, the work in~\cite{trosten2023effects} designed a framework to explore the effectiveness of self-supervision and contrastive alignment in the multi-view clustering task.
		
		While previous multi-view clustering methods have made significant strides in improving clustering performance from diverse perspectives, the issue of imbalanced learning in multi-view clustering has been rarely studied. To this end, we propose a balanced multi-view clustering method aimed at enhancing cooperation among views.
		
		\subsection{Imbalanced Learning}
		Multi-view and multi-modal learning are two closely intertwined concepts. In the context of multi-modal fusion, studies have shown that joint-training supervised multi-modal learning often encounters modality competition~\cite{huang2022modality, huareconboost}. This issue arises when different modalities are optimized synchronously under a unified objective. During training, modalities with more discriminative information tend to dominate the learning process, converging more quickly than others. As a result, the other modalities struggle to update their learning parameters fully. Consequently, the under-optimized uni-modal feature extractors prevent the full exploitation of modality-specific information, generating a bottleneck in boosting the performance of multimodal learning.  
		
		Recently, several methods have emerged to address this issue~\cite{du2023uni, peng2022balanced, wei2024fly, fan2023pmr}. For instance, Du et al.~\cite{du2023uni} employed a knowledge distillation scheme for uni-modal distillation to enhance multimodal model performance. The works in~\cite{peng2022balanced, wei2024fly} utilize the adaptive gradient modulation to suppress the learning process of the dominant modality and facilitate the training of other modalities to achieve more balanced multi-modal learning. Fan et al.~\cite{fan2023pmr} leveraged a prototypical cross-entropy loss to accelerate the learning of weaker modalities. Additionally, the study in~\cite{kontras2024improving} introduced a multi-loss objective to dynamically adjust the learning process for each modality, refining the balancing mechanism. However, these previous methods primarily address the imbalanced learning problem in supervised tasks, making them infeasible to be directly applied for the unsupervised multi-view learning task.  
		
		\section{The Proposed Method}
		\textit{Problem Definition}: For convenience, the multi-view data is presented as $\{\mathbf{X}^r\in \mathbb{R}^{N\times D^r}  \}_{r=1}^M$, where $N$ and $M$ are the number of samples and views. $D^r$ is the feature dimensions of $r$-th view. The goal of multi-view clustering is to formulate a projection to capture the underlying data structures, i.e., $\{\mathbf{X}^r\in \mathbb{R}^{N\times D^r}  \}_{r=1}^M \to \mathbf{Y}$, where $\mathbf{Y}$ denotes the clustering results.
		
		In this section, we first analyze the imbalanced learning phenomenon in multi-view clustering. Then, we introduce the proposed balanced multi-view clustering framework. Finally, the analysis related to the proposed method is given.
		
		\subsection{Imbalanced Multi-view Clustering}\label{sec:imvca}
		
		\begin{figure*}[t]
			\centering
			\includegraphics[width = 1\linewidth]{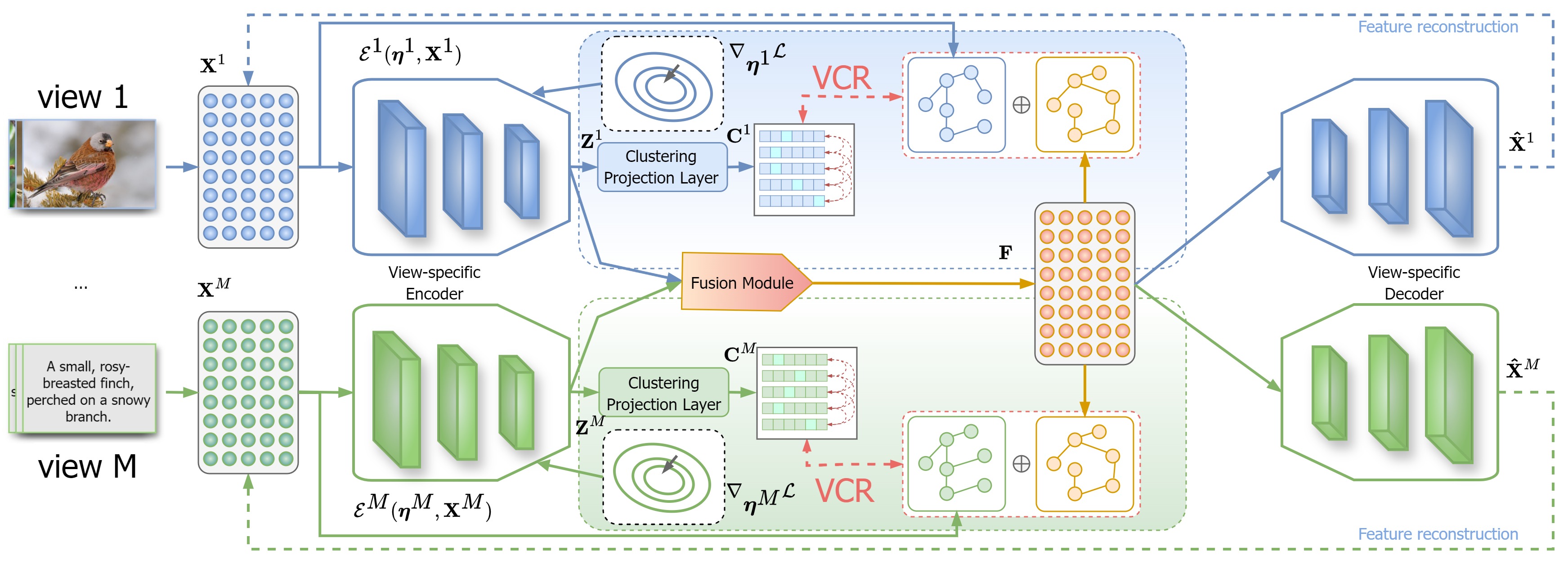}
			\caption{Illustration of balanced multi-view clustering (BMvC). The multi-view data are first processed through their encoders to extract view-specific latent features. Next, a fusion module aggregates the representations into a unified joint one, which is then used to reconstruct the multi-view data via view-specific decoders. Finally, the view-specific contrastive regularization (VCR) and feature reconstruction loss are employed to guide the optimization of the model.}
			\label{fig:BMvC}
		\end{figure*}

		In this part, we analyze the imbalanced phenomenon in the joint training paradigm of previous multi-view clustering methods. The view-specific encoder $\mathcal{E}^r(\boldsymbol{\eta}^r, \mathbf{X}^r)$ is utilized to extract $r$-th view-specific latent features $\mathbf{Z}^r$ from $\mathbf{X}^r$ as,
		\begin{equation}
			\mathbf{Z}^r = \mathcal{E}^r(\boldsymbol{\eta}^r, \mathbf{X}^r), r=1, 2, ..., M,
		\end{equation}
		where $\boldsymbol{\eta}^r$ is the parameters of $r$-th view-specific encoder. After the view-specific feature extraction, their features are usually fused by a feature concatenation operation, and passed through a linear layer to obtain the joint representation $\mathbf{F}$, as follows,
		\begin{equation}\label{eq:feature_fusion}
			\mathbf{F} = \mathcal{F}(\boldsymbol{\delta}, [\mathbf{Z}^1, ..., \mathbf{Z}^M]),
		\end{equation}
		where $\mathcal{F}(\boldsymbol{\delta}, [\mathbf{Z}^1, ..., \mathbf{Z}^M])$ with its parameters $\boldsymbol{\delta}$ represents the fusion module. $[\cdot]$ is a feature concatenation operation. By decomposing $\boldsymbol{\delta}$ as $\boldsymbol{\delta} = [\boldsymbol{\delta}^1; ...; \boldsymbol{\delta}^M]$, Eq.~\eqref{eq:feature_fusion} can be rewritten as,
		\begin{equation}\label{eq:feature_fusion2}
			\mathbf{F} = \sum_{s=1}^M \boldsymbol{\delta}^s \cdot \mathbf{Z}^s.
		\end{equation}
		
		Finally, the joint representation $\mathbf{F}$ passes to the view-specific decoder $\mathcal{D}^r(\boldsymbol{\theta}^r, \mathbf{Z}^r)$ to reconstruct the view-specific features. The objective of a multi-view clustering network is usually to minimize the reconstruction loss between the original and reconstructed features, and additionally constrains the joint representation $\mathbf{F}$ with a clustering loss,
		\begin{equation}\label{eq:jtp_loss}
			\begin{split}
				\mathcal{L} = & \sum_{r=1}^M \ell(\mathbf{\hat{X}}^r, \mathbf{X}^r) + \beta \psi(\mathbf{F})\\
				= &\sum_{r=1}^M \ell(\mathcal{D}^r(\sum_{s=1}^M \boldsymbol{\delta}^s \cdot \mathcal{E}^s(\boldsymbol{\eta}^s, \mathbf{X}^s)), \mathbf{X}^r) \\
				& + \beta \psi(\sum_{s=1}^M \boldsymbol{\delta}^s \cdot \mathcal{E}^s(\boldsymbol{\eta}^s, \mathbf{X}^s)),
			\end{split}
		\end{equation}
		where $\beta$ is a hyper-parameter to balance reconstruction loss $\ell(\cdot)$ and clustering loss $\psi(\cdot)$. During the optimization procedure, the parameters $\boldsymbol{\eta}^s$ in $\mathcal{E}^s(\boldsymbol{\eta}^s, \mathbf{X}^s)$ are updated as,
		\begin{equation}\label{eq:update_eta_jl}
			\begin{split}
				\boldsymbol{\eta}^{s, t+1} 
				= \ & \boldsymbol{\eta}^{s, t} - \gamma \nabla_{\boldsymbol{\eta}^{s, t}}\mathcal{L}\\
				= \ & \boldsymbol{\eta}^{s, t} - \gamma \Big( \Delta \Big),\\
				\text{where} \ \Delta = \ & \Big(\sum_{r=1}^M \frac{\partial \ell(\mathcal{D}^r(\mathbf{F}), \mathbf{X}^r) }{\partial \mathcal{D}^r(\mathbf{F})} \frac{\partial \mathcal{D}^r(\mathbf{F}) }{\partial \mathbf{F}} + \beta \frac{\partial \psi(\mathbf{F}) }{\partial \mathbf{F}} \Big) \\
				& \frac{\partial \boldsymbol{\delta}^s \cdot \mathcal{E}^s(\boldsymbol{\eta}^{s, t}, \mathbf{X}^s) }{\partial \boldsymbol{\eta}^{s, t}}.
			\end{split}
		\end{equation}
		
		In Eq.~\eqref{eq:update_eta_jl}, $\sum_{r=1}^M \frac{\partial \ell(\mathcal{D}^r(\mathbf{F}), \mathbf{X}^r) }{\partial \mathcal{D}^r(\mathbf{F})} \frac{\partial \mathcal{D}^r(\mathbf{F}) }{\partial \mathbf{F}} + \beta \frac{\partial \psi(\mathbf{F}) }{\partial \mathbf{F}}$ is a shared term in gradient descent optimization procedure for parameters in all view-specific encoders. Thus, we can obtain the following observations: 1) The parameters update for encoders corresponding to all views will be stuck simultaneously, when the gradient for the shared term varies small determined by the joint representation $\mathbf{F}$; 2) If certain view, such as the $r$-th view, consists of more discriminative information than other views, it would dominate the multi-view fusion process for the joint representation $\mathbf{F}$ and the gradient $\sum_{r=1}^M \frac{\partial \ell(\mathcal{D}^r(\mathbf{F}), \mathbf{X}^r) }{\partial \mathcal{D}^r(\mathbf{F})} \frac{\partial \mathcal{D}^r(\mathbf{F}) }{\partial \mathbf{F}} + \beta \frac{\partial \psi(\mathbf{F}) }{\partial \mathbf{F}}$. Even if other views are under-optimized and their underlying clustering structures are not fully aggregated in the joint representation $\mathbf{F}$, the information from the dominated view can still reconstruct the features. To address this issue, we intend to find a better trade-off between the exploitation of view-specific patterns and the exploration of view-invariance patterns to fully learn the multi-view information for the clustering task.

		\subsection{Balanced Multi-view Clustering}
		
		Our proposed balanced multi-view clustering (BMvC) method follows the same network architecture as the joint training paradigm approaches depicted in Section~\ref{sec:imvca}. As shown in Fig.~\ref{fig:BMvC}, the multi-view data $\{\mathbf{X}^r\in \mathbb{R}^{N\times D^r}  \}_{r=1}^M$ are processed by their view-specific encoders $\{\mathcal{E}^r(\boldsymbol{\eta}^r, \mathbf{X}^r)\}_{r=1}^M$ to extract the view-specific latent features $\{\mathbf{Z}^r\}_{r=1}^M$. Then, the fusion module $\mathbf{F} = \sum_{s=1}^M \boldsymbol{\delta}^s \cdot \mathbf{Z}^s$ is introduced to aggregate the multi-view representations into a joint one, which is utilized to reconstruct the multi-view data by view-specific decoders $\{\mathcal{D}^r(\boldsymbol{\theta}^r, \mathbf{F})\}_{r=1}^M)$. To boost the exploitation of view-specific patterns and the exploration of view-invariance patterns, we introduce a view-specific contrastive regularization (VCR), which aims to maximize the similarities of view-specific clustering indicators within their neighborhood ones based on the similarities extracted from the view-specific features and the aggregated joint ones. In such a paradigm, the feature extraction capabilities of the view-specific encoders are enhanced since the VCR introduces an extra gradient to optimize the parameters of view-specific encoders, which is analyzed in Section~\ref{sec:trga}. Thus, the total objective function of the proposed method is,
		\begin{equation}\label{eq:rec_loss}
			\begin{split}
				\mathcal{L} = &\  \sum_{r=1}^M \ell(\mathbf{\hat{X}}^r, \mathbf{X}^r) +  \sum_{r=1}^M \lambda \zeta(\mathbf{F}, \mathbf{Z}^r)
				\\
				=\ & \sum_{r=1}^M \ell(\mathcal{D}^r(\sum_{s=1}^M \boldsymbol{\delta}^s \cdot \mathcal{E}^s(\boldsymbol{\eta}^s, \mathbf{X}^s)), \mathbf{X}^m)\\
				+ \ & \lambda \sum_{r=1}^M \zeta(\sum_{s=1}^M \boldsymbol{\delta}^s \cdot \mathcal{E}^s(\boldsymbol{\eta}^s, \mathbf{X}^s), \mathcal{E}^r(\boldsymbol{\eta}^r, \mathbf{X}^r)),
			\end{split}
		\end{equation}
		where $\lambda$ is trade-off hyper-parameter to balance reconstruction loss $\ell(\cdot)$ and view-specific contrastive regularization loss $\zeta(\cdot)$. Finally, the clustering results is obtained by conducting K-means clustering on the learned joint representation $\mathbf{F}$.
		
		\subsubsection{View-specific Contrastive Regularization}
		Preserving the similarity from the original feature space is a useful scheme to enhance the representation capability of the latent features~\cite{he2003locality, cha2024honeybee, tang2021cross}. Inspired by this, we tend to constrain the view-specific feature representation with structural information to boost their representation capabilities. To this end, we formulate three key components, i.e., the clustering projection layer, similarity graph construction, and contrastive loss.
		
		\textbf{Clustering Projection Layer:}
		We project the view-specific representations $\{\mathbf{Z}^r\}_{r=1}^M$ into the clustering space by a linear layer. Then, we employ the QR decomposition~\cite{gander1980algorithms} to ensure the orthogonal property of clustering indicators in the clustering space. As a result, the view-specific clustering indicator matrices $\{\mathbf{C}^s\}_{s=1}^M$, ensuring the orthogonality property in the clustering space, are obtained to reveal the view-specific cluster distributions.
		
		\textbf{Similarity Graph Construction:}
		Similarity graphs are widely used to depict the relationships among different samples. In this part, we employ the similarities to enhance the feature discriminative capability of the view-specific features. The CAN~\cite{nie2014clustering} is a useful adaptive neighbor graph learning manner, which measures the similarities among different samples as the link probabilities,
		\begin{equation}\label{eq:can1}
			\begin{split}
				\min_{\mathbf{P}} \ & \sum_{i,j=1}^N(\Vert \mathbf{q}_i - \mathbf{q}_j \Vert_2^2)p_{ij} + \alpha p_{ij}^2)\\
				s.t. \ & \sum_{j=1}^n p_{ij} = 1,  p_{ij} > 0,
			\end{split}
		\end{equation}
		where $\alpha$ is a hyper-parameter. $p_{ij}$ denotes the similarity between sample $\mathbf{q}_i$ and $\mathbf{q}_j $. According to the Lagrangian Augmention and the Karush-Kuhn-Tucker condition, the closed-form solution of Eq.~\eqref{eq:can1} can be obtained as~\cite{nie2014clustering},
		\begin{equation}\label{eq:can2}
			p_{ij} = \frac{\hat{d}_{i,k+1} - \hat{d}_{ij}}{k\hat{d}_{i,k+1} - \sum_{j=1}^k \hat{d}_{ij}}
		\end{equation}
		where $k$ is the number of nearest neighbors, $\hat{d}_{ij}$ represents the $ij$-th element in matrix $\hat{\mathbf{D}}$, which is formulated by sorting matrix $\mathbf{D}$, with its $ij$-th element $d_{ij} = \Vert \mathbf{q}_i - \mathbf{q}_j \Vert_2^2$, from small to large. 
		
		Based on the similarity graph construction manner in Eq.~\eqref{eq:can2}, we can formulate the view-specific graphs $\{\mathbf{G}^s\}_{s=1}^M$ and joint graph $\bar{\mathbf{G}}$ from the view-specific original features $\{\mathbf{X}^s\}_{s=1}^M$ and joint features $\mathbf{F}$. The joint graph $\bar{\mathbf{G}}$ depicts the sample relationships with the multi-view information, while the view-specific graphs $\{\mathbf{G}^s\}_{s=1}^M$ measure the correlations among samples only with their view-specific information. To simultaneously keep the view-specific and view-joint knowledge, the view-specific graphs $\{\mathbf{G}^s\}_{s=1}^M$ and joint graph $\bar{\mathbf{G}}$ are fused as follows,
		\begin{equation}
			\mathbf{\hat{G}}^s = \frac{\mathbf{G}^s + \bar{\mathbf{G}}}{2}, s=1, ..., M,
		\end{equation}
		where $\mathbf{\hat{G}}^s$ denotes the enhanced $s$-th view-specific graph.
		
		\textbf{Contrastive Loss:}
		The contrastive learning approaches push the samples far away from the negative anchors, while pull in the samples with positive anchors~\cite{chen2020simple, wang2021understanding}. The view-specific similarity graphs $\{\mathbf{\hat{G}}^s\}_{s=1}^M$ provide the positive and negative pairs for contrastive learning corresponding to the view-specific clustering indicators $\{\mathbf{C}^s\}_{s=1}^M$. To keep the structural information into the view-specific clustering indicators, we first compute the sample correlations via the cosine similarity as,
		\begin{equation}
			a_{ij}^s = \frac{\mathbf{c}_i^s\mathbf{c}_j^{s\top}}{\Vert \mathbf{c}_i^s \Vert \Vert \mathbf{c}_j^s \Vert},
		\end{equation}
		where $a_{ij}^s \in [0, 1]$ is the correlation between $\mathbf{c}_i^s$ and $\mathbf{c}_j^s$. Ideally, if the $\mathbf{c}_i^s$ and $\mathbf{c}_j^s$ are the positive pair, $a_{ij}^s$ is closed to $1$. And if $\mathbf{c}_i^s$ and $\mathbf{c}_h^s$ are the negative pair, $a_{ih}^s$ is closed to $0$. Thus, we can formulate the following objective function as,
		\begin{equation}\label{eq:vcrn_loss}
			\min_{\mathbf{C}^s} \  = \sum_{i=1}^N ( \sum_{j\in N^+} (1 - a_{ij}^s)^2 + \sum_{h\in N^-} (a_{ih}^s)^2 ),
		\end{equation}
		where $N^+$ and $N^-$ denote the positive and negative sets, respectively. Considering the view-specific similarity graphs $\{\mathbf{\hat{G}}^s\}_{s=1}^M$ measure the data relationships in the range $[0, 1]$ rather than the binary value, i.e., $0$ or $1$, we reformulate the Eq.~\eqref{eq:vcrn_loss} with a weighted manner as,
		\begin{equation}\label{eq:vcr_loss}
			\zeta = \frac{\sum_{i,j=1}^N ( \hat{g}^s_{ij} (1 - a^s_{ij}) )^2 }{\sum_{i,j=1}^N \hat{g}^s_{ij}} + \frac{\sum_{i,j=1}^N ((1 - \hat{g}^s_{ij}) (a^s_{ij}))^2 }{\sum_{i,j=1}^N (1 - \hat{g}^s_{ij})} .
		\end{equation}
		
		In Eq.~\eqref{eq:vcr_loss}, the first term is used to constrain the samples more closed to each other when they are in the same neighborhoods, while the second term encourages pushing the samples away from others as they are in distinct neighborhoods. In such a manner, the clustering distributions of view-specific features are well learned under the guidance of view-specific graphs and joint graph. Thus, the view-specific encoders can be further optimized to enhance their feature extraction capabilities.
		
		\subsubsection{The Resultant Gradient Analysis}\label{sec:trga}
		During the optimization procedure, gradient from the view-specific contrastive regularization for update the parameters $\boldsymbol{\eta}^s$ in $\mathcal{E}^s(\boldsymbol{\eta}^s,\mathbf{X^s})$ is represented as,
		\begin{equation}\label{eq:update_eta_vcr}
			\begin{split}
				& \nabla_{\boldsymbol{\eta}^{s}} \sum_{r=1}^M \zeta(\sum_{s=1}^M \boldsymbol{\delta}^s \cdot \mathcal{E}^s(\boldsymbol{\eta}^s, \mathbf{X}^s), \mathcal{E}^r(\boldsymbol{\eta}^r, \mathbf{X}^r))
				\\
				= \ & \frac{\partial \zeta(\mathbf{F}, \mathbf{Z}^s) }{\partial \mathbf{F}} \frac{\partial \boldsymbol{\delta}^s \cdot \mathcal{E}^s(\boldsymbol{\eta}^{s, t}, \mathbf{X}^s) }{\partial \boldsymbol{\eta}^{s, t}}\\
				+ \ & \frac{\partial \zeta(\mathbf{F}, \mathbf{Z}^s) }{\partial \mathbf{Z}^s} \frac{\partial \mathcal{E}^s(\boldsymbol{\eta}^{s, t}, \mathbf{X}^s) }{\partial \boldsymbol{\eta}^{s, t}}.
			\end{split}
		\end{equation}
		
		Thus, the parameters $\boldsymbol{\eta}^s$ in $\mathcal{E}^s(\boldsymbol{\eta}^s, \mathbf{X}^s)$ are updated as,
		\begin{equation}\label{eq:update_eta_total}
			\begin{split}
				\boldsymbol{\eta}^{s, t+1} = \ & \boldsymbol{\eta}^{s, t} - \gamma (\Delta_1 + \lambda \Delta_2),\\
				\text{where} \ \Delta_1 = \ & \Big(\sum_{r=1}^M \frac{\partial \ell(\mathcal{D}^r(\mathbf{F}), \mathbf{X}^r) }{\partial \mathcal{D}^r(\mathbf{F})} \frac{\partial \mathcal{D}^r(\mathbf{F}) }{\partial \mathbf{F}} \\
				+ \ & \beta \frac{\partial \psi(\mathbf{F}) }{\partial \mathbf{F}} \Big) \frac{\partial \boldsymbol{\delta}^s \cdot \mathcal{E}^s(\boldsymbol{\eta}^{s, t}, \mathbf{X}^s) }{\partial \boldsymbol{\eta}^{s, t}},\\
				\Delta_2 = \ & \frac{\partial \zeta(\mathbf{F}, \mathbf{Z}^s) }{\partial \mathbf{F}} \frac{\partial \boldsymbol{\delta}^s \cdot \mathcal{E}^s(\boldsymbol{\eta}^{s, t}, \mathbf{X}^s) }{\partial \boldsymbol{\eta}^{s, t}}\\
				+ \ & \frac{\partial \zeta(\mathbf{F}, \mathbf{Z}^s) }{\partial \mathbf{Z}^s} \frac{\partial \mathcal{E}^s(\boldsymbol{\eta}^{s, t}, \mathbf{X}^s) }{\partial \boldsymbol{\eta}^{s, t}}.
			\end{split}
		\end{equation}
		
		\newtheorem{theorem}{Theorem}[section]
		\newtheorem{lemma}[theorem]{Lemma}
		
		\begin{theorem}\label{the:theorem2}
			The proposed VCR adaptively modulates the gradients for updating parameters of view-specific encoders, facilitating balanced multi-view clustering.
		\end{theorem}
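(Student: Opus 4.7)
My strategy is to directly compare the joint-training gradient in Eq.~\eqref{eq:update_eta_jl} with the augmented gradient induced by VCR in Eq.~\eqref{eq:update_eta_total}, and show that the additional term $\lambda\Delta_2$ is intrinsically view-dependent, whereas $\Delta_1$ is gated by a factor that is \emph{shared} across all views. I would first decompose $\Delta_2$ into two pieces: a component $\frac{\partial \zeta(\mathbf{F},\mathbf{Z}^s)}{\partial \mathbf{F}}\frac{\partial (\boldsymbol{\delta}^s\cdot\mathcal{E}^s)}{\partial \boldsymbol{\eta}^{s,t}}$ that flows through the fused representation (and therefore behaves structurally like $\Delta_1$), and a purely view-specific component $\frac{\partial \zeta(\mathbf{F},\mathbf{Z}^s)}{\partial \mathbf{Z}^s}\frac{\partial \mathcal{E}^s}{\partial \boldsymbol{\eta}^{s,t}}$ that bypasses the fusion weights $\boldsymbol{\delta}^s$. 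This second component is the key: unlike $\Delta_1$, whose magnitude for every view contains the common factor $\sum_r \frac{\partial \ell}{\partial \mathcal{D}^r(\mathbf{F})}\frac{\partial \mathcal{D}^r(\mathbf{F})}{\partial \mathbf{F}}+\beta\frac{\partial \psi}{\partial \mathbf{F}}$, the new component depends only on how well the view-specific indicators $\mathbf{C}^s$ respect the enhanced graph $\hat{\mathbf{G}}^s$. Hence, when a dominant view drives the shared factor toward zero, the weaker views still receive a non-vanishing signal through $\Delta_2$.

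Next, I would quantify the adaptive magnitude. Propagating the chain rule through the clustering projection layer and the cosine correlation $a^s_{ij}$, the per-sample contribution to $\frac{\partial \zeta(\mathbf{F},\mathbf{Z}^s)}{\partial \mathbf{Z}^s}$ from Eq.~\eqref{eq:vcr_loss} can be written as a weighted sum over pairs $(i,j)$ of residuals of the form $(\hat{g}^s_{ij}-a^s_{ij})$, normalized by the pair counts appearing in the two denominators of Eq.~\eqref{eq:vcr_loss}. The magnitude of each residual is large precisely when the view-specific clustering distribution fails to match the structural target $\hat{\mathbf{G}}^s$, which is exactly the situation of an under-optimized view whose internal structure lags behind the fused one; conversely, a well-optimized view produces $a^s_{ij}\approx \hat{g}^s_{ij}$ and contributes a vanishing update. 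This establishes the adaptive modulation: the VCR gradient automatically scales up for poorly trained encoders and scales down for well-trained ones.

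Finally, I would conclude by combining the two observations: since $\boldsymbol{\eta}^{s,t+1}=\boldsymbol{\eta}^{s,t}-\gamma(\Delta_1+\lambda\Delta_2)$, with $\lambda\Delta_2$ (i) not gated by the shared saturation factor that stalls $\Delta_1$ when the dominant view is already well fit, and (ii) inversely proportional to the quality of the $s$-th view's current clustering distribution, the per-view effective step size is enlarged precisely for the lagging views. This equalizes the optimization progress across views, which is the sense in which VCR facilitates balanced multi-view clustering. The main obstacle I anticipate is making step (ii) fully rigorous: bounding $\lVert \frac{\partial \zeta}{\partial \mathbf{Z}^s}\rVert$ from below in terms of $\lVert \hat{\mathbf{G}}^s-\mathbf{A}^s\rVert$ requires propagating derivatives through the QR-orthogonalized projection layer, whose Jacobian is not elementary; I would handle this by treating the orthogonalization as an invertible smooth reparameterization of $\mathbf{C}^s$ on the open Stiefel manifold, so that the residual-to-gradient correspondence is preserved up to a bounded, view-independent conditioning factor.
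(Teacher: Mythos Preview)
Your proposal follows essentially the same line of reasoning as the paper: both argue that the VCR-induced term $\Delta_2$ in Eq.~\eqref{eq:update_eta_total} contains a view-specific component whose magnitude scales with the mismatch between the view's current clustering structure $\mathbf{A}^s$ and the target graph $\hat{\mathbf{G}}^s$, so that under-optimized views receive larger corrective updates while well-fit views receive smaller ones. The paper makes this argument qualitatively via a two-view comparison of loss values $\zeta(\mathbf{F},\mathbf{Z}^1)<\zeta(\mathbf{F},\mathbf{Z}^2)$ (inferring the corresponding gradient magnitudes), whereas you attempt an explicit residual calculation and flag the QR-Jacobian obstacle, but the core logic is identical.
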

		\begin{proof}
			Since $\sum_{j=1}^n \hat{g}^s_{ij} = 1,  \hat{g}^s_{ij} > 0$, the Eq.~\eqref{eq:vcr_loss} can be further rewritten into the matrix form as,
			\begin{equation}\label{eq:vcr_loss_re}
				\zeta = \frac{\mathrm{Tr}\Big(((\mathbf{1} - \hat{\mathbf{G}}^s)\odot \mathbf{A}^s)^2 \Big)}{N} + \frac{\mathrm{Tr}\Big((\hat{\mathbf{G}}^s\odot (\mathbf{1} - \mathbf{A}^s))^2\Big) }{N^2 - N} ,
			\end{equation}
			where $\odot$ is the Hadamard product, and $\mathbf{A}^s$ represents the correlation matrix for $s$-th view computed from $\mathbf{C}^s$. Eq.~\eqref{eq:vcr_loss_re} measures the alignment between matrix $\hat{\mathbf{G}}^s$ and $\mathbf{A}^s$. And $\zeta(\cdot)$ achieves the minimum value when $\hat{\mathbf{G}}^s$ and $\mathbf{A}^s$ share similar distributions as well as the clustering structures. 
			
			We assume that view 1 contains more discriminative information than view 2. Consequently, the fused representation $\mathbf{F}$ is expected to offer clearer clustering structures compared to view-specific features $\mathbf{Z}^1$ and $\mathbf{Z}^2$. As illustrated in Section~\ref{sec:imvca}, the view with more discriminative information will dominate the learning process. Thus, the representation $\mathbf{F}$ will contain a similar clustering structure with $\mathbf{Z}^1$, since the clustering knowledge of $\mathbf{F}$ mainly comes from $\mathbf{Z}^1$ when only employ the feature reconstruction loss. Consequently, the resultant $\bar{\mathbf{G}}^s$ and $\mathbf{G}$ both well align with $\mathbf{A}^s$, resulting a small value for $\zeta(\mathbf{F}, \mathbf{Z}^1)$. On the contrary, $\zeta(\mathbf{F}, \mathbf{Z}^2)$ should be a large value due to the fewer contributions from $\mathbf{Z}^2$ in the learning process of the model.
			
			$\zeta(\mathbf{F}, \mathbf{Z}^1) < \zeta(\mathbf{F}, \mathbf{Z}^2)$ suggests that a more discriminative view is better at ensuring that the local similarities captured by the indicators align with the neighborhood structures of both the joint features and the original view-specific features. Consequently, view 2 must learn a clustering distribution that closely resembles the joint features to significantly reduce the loss $\zeta(\mathbf{F}, \mathbf{Z}^2)$. The clustering distribution of the joint features should also be aligned more closely with that of view 2, further facilitating the reduction of the loss $\zeta(\mathbf{F}, \mathbf{Z}^2)$. To implement this process, larger values of$\frac{\partial \zeta(\mathbf{F}, \mathbf{Z}^2) }{\partial \mathbf{F}}$ and 
			$\frac{\partial \zeta(\mathbf{F}, \mathbf{Z}^2) }{\partial \mathbf{Z}^2}$ should be achieved, while smaller values of $\frac{\partial \zeta(\mathbf{F}, \mathbf{Z}^1) }{\partial \mathbf{F}}$ and 
			$\frac{\partial \zeta(\mathbf{F}, \mathbf{Z}^1) }{\partial \mathbf{Z}^1}$ should be maintained. As a result, the VCR assigns a larger scale factor to the gradient of the view with less discriminative information and a lower scale factor to the gradient of the view with more discriminative information, thereby enhancing feature learning for its view-specific encoder. Ultimately, under the guidance of VCR, the model will achieve a more balanced multi-view clustering. This completes the proof.
			
		\end{proof}
		
		Thus, based on the Theorem~\ref{the:theorem2}, Our proposed VCR provides additional gradients for the parameters updating in view-specific encoders and performs balanced learning for multi-view clustering.

		\section{Experiments}
		
		\subsection{Experimental Settings}
		
		\subsubsection{Datasets}
		In our experiments, we evaluate our proposed method on eight widely used multi-view datasets. The details are as follows:
		
		\textbf{CUB}~\footnote{https://www.vision.caltech.edu/visipedia/CUB-200.html}: This dataset consists of 600 images of different bird species, each accompanied by text descriptions, spanning 10 categories. Each sample is represented by 4096-dimensional deep image features and 300-dimensional text features.
		
		\textbf{HW}~\footnote{https://archive.ics.uci.edu/ml/datasets/Multiple+Features}: This dataset contains 2000 handwritten images corresponding to the digits 0-9. Each image is characterized by three views: 76-dimensional FOU features, 216-dimensional FAC features, and 240-dimensional Pix features.
		
		\textbf{Youtube}~\cite{wolf2011face}: This video dataset comprises 2000 samples from 10 classes, with six different types of features extracted to represent each sample.
		
		\textbf{OutdoorScene}~\cite{hu2020multi}: This dataset includes 2688 images of outdoor scenes captured from eight different scene groups. Each image is described using four views: 512-dimensional GIST features, 432-dimensional HOG features, 256-dimensional LBP features, and 48-dimensional Gabor features.
		
		\textbf{RGB-D}~\cite{kong2014you}: This dataset consists of 1449 indoor scene images, each with corresponding text descriptions belonging to 13 categories. Pre-trained ResNet-50 and doc2vec models are utilized to extract 2048-dimensional deep image features and 300-dimensional text features, respectively.
		
		\textbf{nuswide}~\cite{zhen2019deep}: This dataset includes 9000 images along with corresponding tags from 10 categories. Each sample is represented by 4096-dimensional deep image features and 300-dimensional text features.
		
		\textbf{xrmb}~\cite{zhen2019deep}: This dataset consists of 400000 text-image pairs across 200 classes. Each sample is represented by 4096-dimensional deep image features and 300-dimensional text features.
		
		\textbf{xmedia}~\cite{zhen2019deep}: This dataset contains 85297 samples collected from both acoustic and articulation views, spanning 39 classes. Two different types of features are extracted to represent each sample.
		
		For the nuswide, xrmb, xmedia datasets, we randomly selected 3000, 4852, and 5000 samples, respectively, for our experiments. The features in all datasets are scaled into the range $[0, 1]$ in our experiments.
		
		\subsubsection{Compared Methods}
		To verify the superiority of proposed method, we compare it with seven shallow multi-view clustering methods (e.g., \textbf{AWP}~\cite{nie2018multiview}, \textbf{GMC}~\cite{wang2019gmc}, \textbf{LMVSC}~\cite{kang2020large}, \textbf{OPMC}~\cite{liu2021one}, \textbf{EEOMVC}~\cite{wang2023efficient}, \textbf{UDBGL}~\cite{fang2023efficient}, \textbf{CAMVC}~\cite{zhang2024learning}) and seven deep multi-view clustering methods (e.g., \textbf{DCCA}~\cite{wang2015deep}, \textbf{MFLVC}~\cite{xu2022multi}, \textbf{DealMVC}~\cite{yang2023dealmvc}, \textbf{CVCL}~\cite{chen2023deep}, \textbf{SBMvC}~\cite{wu2024self}, \textbf{SCM}~\cite{luo2024simple}, \textbf{MVCAN}~\cite{xu2024investigating}). For all compared approaches, we tune the parameters with a grid search scheme as suggested in their papers to implement their best clustering performance.
		
		\begin{table*}[!htbp]
			\centering
			\caption{The clustering performance measured by ACC, NMI, ARI, and Fscore of all compared methods on eight multi-view datasets. The highest and the second highest values under each metric are {\color[HTML]{EA6B66}\textbf{bolded}} and {\color[HTML]{67AB9F}\ul{ underlined}}, respectively.}
			\label{tab:mvc_results}
			\resizebox{\textwidth}{!}{%
				\begin{tabular}{@{}c|cccc|cccc|cccc|cccc@{}}
					\toprule[1.5pt]
					Datasets                                                                & \multicolumn{4}{c|}{CUB}                                                                                                                                  & \multicolumn{4}{c|}{HW}                                                                                                                                   & \multicolumn{4}{c|}{Youtube}                                                                                                                              & \multicolumn{4}{c}{OutdoorScene}                                                                                                                          \\ \midrule
					Methods                                                                 & ACC                                  & NMI                                  & ARI                                  & Fscore                               & ACC                                  & NMI                                  & ARI                                  & Fscore                               & ACC                                  & NMI                                  & ARI                                  & Fscore                               & ACC                                  & NMI                                  & ARI                                  & Fscore                               \\ \midrule
					AWP{\tiny\textcolor{gray}{[KDD18]}}\cite{nie2018multiview}          & 81.17                                & 75.20                                & 65.52                                & 69.00                                & {\color[HTML]{67AB9F}\ul{95.80}}     & {\color[HTML]{67AB9F}\ul{91.67}}     & {\color[HTML]{67AB9F}\ul{91.11}}     & {\color[HTML]{67AB9F}\ul{91.99}}     & 31.40                                & 18.26                                & 12.13                                & 22.02                                & 60.83                                & 45.51                                & 37.37                                & 45.98                                \\
					GMC{\tiny\textcolor{gray}{[TKDE19]}}\cite{wang2019gmc}         & 79.50                                & 78.95                                & 66.48                                & 70.03                                & 85.20                                & 90.26                                & 82.70                                & 84.56                                & 12.15                                & 5.19                                 & 0.15                                 & 18.05                                & 33.56                                & 42.33                                & 18.78                                & 35.02                                \\
					LMVSC{\tiny\textcolor{gray}{[   AAA20]}}\cite{kang2020large}     & 78.03                                & 74.70                                & 63.80                                & 67.46                                & 82.50                                & 77.52                                & 70.87                                & 73.85                                & 25.01                                & 10.06                                & 5.33                                 & 14.97                                & 54.67                                & 39.20                                & 31.11                                & 39.89                                \\
					OPMC{\tiny\textcolor{gray}{[ICCV21]}}\cite{liu2021one}        & 71.50                                & 75.67                                & 60.92                                & 64.95                                & 90.60                                & 83.33                                & 80.70                                & 82.63                                & 25.65                                & 13.80                                & 8.38                                 & 17.73                                & 63.17                                & 51.57                                & 42.22                                & 49.62                                \\
					EEOMVC{\tiny\textcolor{gray}{[TNNLS23]}}\cite{wang2023efficient}     & 65.00                                & 66.49                                & 51.32                                & 56.43                                & 70.15                                & 63.81                                & 55.79                                & 60.35                                & 26.25                                & 11.80                                & 7.08                                 & 17.27                                & 57.70                                & 40.35                                & 29.64                                & 39.02                                \\
					UDBGL{\tiny\textcolor{gray}{[TNNLS23]}}\cite{fang2023efficient}      & 79.35                                & {\color[HTML]{67AB9F}\ul{79.82}}     & 68.91                                & 72.17                                & 92.14                                & 85.62                                & 83.69                                & 85.32                                & 38.03                                & 22.74                                & 16.39                                & 24.84                                & 69.46                                & 52.80                                & 45.79                                & 52.77                                \\
					CAMVC{\tiny\textcolor{gray}{[AAAI24]}}\cite{zhang2024learning}       & 81.66                                & 79.67                                & 69.65                                & 72.73                                & 91.24                                & 85.14                                & 82.31                                & 84.08                                & 25.70                                & 11.58                                & 6.60                                 & 16.13                                & 71.16                                & 55.82                                & 48.31                                & 54.91                                \\ \midrule
					DCCA{\tiny\textcolor{gray}{[ICML15]}}\cite{wang2015deep}        & 60.09                                & 55.87                                & 40.79                                & 48.84                                & 85.04                                & 85.15                                & 77.75                                & 83.50                                & 10.68                                & 1.48                                 & 0.01                                 & 19.07                                & 48.16                                & 51.58                                & 36.42                                & 49.76                                \\
					MFLVC{\tiny\textcolor{gray}{[CVPR22]}}\cite{xu2022multi}       & 67.00                                & 62.91                                & 49.24                                & 55.59                                & 82.65                                & 80.41                                & 74.02                                & 77.14                                & 39.70                                & 28.84                                & 20.02                                & 29.36                                & 64.62                                & 55.15                                & 42.90                                & 51.11                                \\
					DealMVC{\tiny\textcolor{gray}{[ACM   MM23]}}\cite{yang2023dealmvc} & 61.00                                & 66.10                                & 51.27                                & 58.04                                & 82.10                                & 78.73                                & 72.57                                & 75.42                                & 36.60                                & 23.98                                & 15.88                                & 26.24                                & 75.26                                & 62.73                                & 54.21                                & 61.83                                \\
					CVCL{\tiny\textcolor{gray}{[ICCV23]}}\cite{chen2023deep}     & 79.33                                & 71.03                                & 61.54                                & 66.17                                & 92.95                                & 87.53                                & 85.43                                & 87.15                                & {\color[HTML]{67AB9F}\ul{46.15}}     & {\color[HTML]{67AB9F}\ul{31.68}}     & {\color[HTML]{67AB9F}\ul{23.32}}     & {\color[HTML]{67AB9F}\ul{31.50}}     & 71.65                                & 60.23                                & 50.34                                & 57.98                                \\
					SBMvC{\tiny\textcolor{gray}{[TMM24]}}\cite{luo2024simple}        & 72.83                                & 68.71                                & 55.82                                & 61.66                                & 90.95                                & 84.76                                & 81.43                                & 83.85                                & 41.60                                & 29.72                                & 20.27                                & 29.08                                & 72.10                                & 60.57                                & 52.56                                & 59.89                                \\
					SCM{\tiny\textcolor{gray}{[IJCAI24]}}\cite{luo2024simple}     & {\color[HTML]{67AB9F}\ul{84.83}}     & 79.11                                & {\color[HTML]{67AB9F}\ul{71.23}}     & {\color[HTML]{67AB9F}\ul{75.17}}     & 89.25                                & 80.57                                & 77.63                                & 80.72                                & 37.20                                & 23.01                                & 16.45                                & 24.73                                & 60.45                                & 51.95                                & 41.59                                & 49.93                                \\
					MVCAN{\tiny\textcolor{gray}{[CVPR24]}}\cite{xu2024investigating}       & 80.67                                & 78.42                                & 67.77                                & 72.99                                & 95.75                                & 91.58                                & 90.95                                & 91.98                                & 29.75                                & 22.31                                & 13.64                                & 25.35                                & {\color[HTML]{67AB9F}\ul{75.89}}     & {\color[HTML]{67AB9F}\ul{63.17}}     & {\color[HTML]{67AB9F}\ul{56.01}}     & {\color[HTML]{67AB9F}\ul{62.50}}     \\ \midrule
					\rowcolor[HTML]{EFEFEF} 
					Ours                                                                    & \color[HTML]{EA6B66}\textbf{{86.67}} & \color[HTML]{EA6B66}\textbf{{82.21}} & \color[HTML]{EA6B66}\textbf{{75.53}} & \color[HTML]{EA6B66}\textbf{{78.44}} & \color[HTML]{EA6B66}\textbf{{97.95}} & \color[HTML]{EA6B66}\textbf{{95.18}} & \color[HTML]{EA6B66}\textbf{{95.46}} & \color[HTML]{EA6B66}\textbf{{96.01}} & \color[HTML]{EA6B66}\textbf{{47.67}} & \color[HTML]{EA6B66}\textbf{{34.25}} & \color[HTML]{EA6B66}\textbf{{26.23}} & \color[HTML]{EA6B66}\textbf{{34.70}} & \color[HTML]{EA6B66}\textbf{{78.46}} & \color[HTML]{EA6B66}\textbf{{64.46}} & \color[HTML]{EA6B66}\textbf{{58.64}} & \color[HTML]{EA6B66}\textbf{{65.12}} \\ \bottomrule
					
					\addlinespace[0.3em]
					
					\toprule
					Datasets                                                                & \multicolumn{4}{c|}{RGB-D}                                                                                                                                                                                                                              & \multicolumn{4}{c|}{nuswide}                                                                                                                                                                                                                               & \multicolumn{4}{c|}{xrmb}                                                                                                                                                                                                                          & \multicolumn{4}{c}{xmedia}                                                                                                                                                                                                                      \\ \midrule
					Methods                                                                 & ACC                                                         & NMI                                                         & ARI                                                         & Fscore                                                      & ACC                                                         & NMI                                                         & ARI                                                         & Fscore                                                      & ACC                                                         & NMI                                                         & ARI                                                         & Fscore                                                      & ACC                                                         & NMI                                                         & ARI                                                         & Fscore                                                      \\ \midrule
					AWP{\tiny\textcolor{gray}{[KDD18]}}\cite{nie2018multiview}          & 41.34                                                       & 28.95                                                       & 18.55                                                       & 27.03                                                       & 55.93                                                       & 47.05                                                       & 36.38                                                       & 44.37                                                       & 24.07                                                       & 38.19                                                       & 13.66                                                       & 16.26                                                       & 81.12                                                       & 84.86                                                       & 74.32                                                       & 74.95                                                       \\
					GMC{\tiny\textcolor{gray}{[TKDE19]}}\cite{wang2019gmc}         & 42.24                                                       & 30.97                                                       & 8.81                                                        & 29.88                                                       & 22.83                                                       & 19.60                                                       & 1.72                                                        & 19.68                                                       & 13.09                                                       & 23.20                                                       & 0.41                                                        & 6.53                                                        & 80.66                                                       & 89.34                                                       & 54.23                                                       & 55.96                                                       \\
					LMVSC{\tiny\textcolor{gray}{[   AAA20]}}\cite{kang2020large}     & {\color[HTML]{67AB9F}\ul{45.50}}                            & {\color[HTML]{67AB9F}\ul{39.23}}                            & 24.94                                                       & 32.83                                                       & 57.02                                                       & 42.89                                                       & 34.93                                                       & 41.65                                                       & 20.89                                                       & 34.13                                                       & 10.75                                                       & 13.34                                                       & 78.45                                                       & 90.57                                                       & 76.47                                                       & 77.09                                                       \\
					OPMC{\tiny\textcolor{gray}{[ICCV21]}}\cite{liu2021one}        & 41.20                                                       & 37.36                                                       & 22.96                                                       & 31.13                                                       & 58.60                                                       & 44.87                                                       & 37.35                                                       & 43.82                                                       & 27.72                                                       & 43.15                                                       & 16.39                                                       & 18.85                                                       & 85.24                                                       & 93.46                                                       & 85.46                                                       & 85.83                                                       \\
					EEOMVC{\tiny\textcolor{gray}{[TNNLS23]}}\cite{wang2023efficient}     & 43.55                                                       & 36.70                                                       & 22.85                                                       & 32.18                                                       & 61.27                                                       & 45.02                                                       & 39.80                                                       & 46.09                                                       & 26.88                                                       & 39.16                                                       & 14.79                                                       & 17.27                                                       & {\color[HTML]{67AB9F}\ul{89.20}}                            & 92.53                                                       & {\color[HTML]{67AB9F}\ul{87.61}}                            & {\color[HTML]{67AB9F}\ul{87.93}}                            \\
					UDBGL{\tiny\textcolor{gray}{[TNNLS23]}}\cite{fang2023efficient}      & 43.69                                                       & 38.36                                                       & {\color[HTML]{67AB9F}\ul{25.71}}                            & 33.32                                                       & 64.49                                                       & 52.38                                                       & 45.59                                                       & 51.25                                                       & 29.36                                                       & 44.05                                                       & 17.63                                                       & 20.08                                                       & 85.74                                                       & {\color[HTML]{67AB9F}\ul{93.80}}                            & 86.63                                                       & 86.98                                                       \\
					CAMVC{\tiny\textcolor{gray}{[AAAI24]}}\cite{zhang2024learning}       & 35.14                                                       & 29.08                                                       & 16.89                                                       & 25.33                                                       & {\color[HTML]{FF0000} \color[HTML]{EA6B66}\textbf{{68.05}}} & {\color[HTML]{FF0000} \color[HTML]{EA6B66}\textbf{{54.88}}} & {\color[HTML]{FF0000} \color[HTML]{EA6B66}\textbf{{51.19}}} & {\color[HTML]{FF0000} \color[HTML]{EA6B66}\textbf{{56.20}}} & 29.00                                                       & 43.86                                                       & 17.93                                                       & 20.35                                                       & 85.36                                                       & 92.47                                                       & 85.88                                                       & 86.24                                                       \\ \midrule
					DCCA{\tiny\textcolor{gray}{[ICML15]}}\cite{wang2015deep}        & 17.19                                                       & 6.51                                                        & 2.32                                                        & 13.58                                                       & 48.16                                                       & 40.03                                                       & 22.89                                                       & 41.40                                                       & 23.15                                                       & 37.09                                                       & 10.88                                                       & 19.42                                                       & 49.68                                                       & 54.96                                                       & 31.46                                                       & 40.99                                                       \\
					MFLVC{\tiny\textcolor{gray}{[CVPR22]}}\cite{xu2022multi}       & 38.03                                                       & 21.41                                                       & 18.37                                                       & 29.24                                                       & 32.10                                                       & 20.75                                                       & 16.09                                                       & 24.40                                                       & 24.69                                                       & 32.82                                                       & 13.17                                                       & 17.52                                                       & 13.28                                                       & 26.71                                                       & 6.93                                                        & 12.33                                                       \\
					DealMVC{\tiny\textcolor{gray}{[ACM   MM23]}}\cite{yang2023dealmvc} & 45.27                                                       & 27.05                                                       & 23.14                                                       & {\color[HTML]{67AB9F}\ul{37.64}}                            & 51.93                                                       & 37.88                                                       & 32.75                                                       & 38.81                                                       & 17.19                                                       & 27.59                                                       & 8.83                                                        & 14.61                                                       & 10.20                                                       & 22.72                                                       & 5.86                                                        & 10.10                                                       \\
					CVCL{\tiny\textcolor{gray}{[   ICCV23]}}\cite{chen2023deep}     & 25.86                                                       & 17.05                                                       & 8.02                                                        & 21.88                                                       & 58.63                                                       & 48.30                                                       & 39.92                                                       & 47.41                                                       & 27.10                                                       & 38.92                                                       & 14.44                                                       & 18.32                                                       & 34.02                                                       & 51.15                                                       & 24.53                                                       & 30.46                                                       \\
					SBMvC{\tiny\textcolor{gray}{[TMM24]}}\cite{wu2024self}        & 31.06                                                       & 23.25                                                       & 15.50                                                       & 25.28                                                       & 53.97                                                       & 40.63                                                       & 34.91                                                       & 41.37                                                       & 29.78                                                       & 43.76                                                       & 18.02                                                       & 21.84                                                       & 33.18                                                       & 48.07                                                       & 26.32                                                       & 26.09                                                       \\
					SCM{\tiny\textcolor{gray}{[   IJCAI24]}}\cite{luo2024simple}     & 35.20                                                       & 30.35                                                       & 17.84                                                       & 29.31                                                       & 57.63                                                       & 44.17                                                       & 38.51                                                       & 44.56                                                       & 29.55                                                       & 41.92                                                       & 16.52                                                       & 20.14                                                       & 85.96                                                       & 90.71                                                       & 83.96                                                       & 84.19                                                       \\
					MVCAN{\tiny\textcolor{gray}{[CVPR24]}}\cite{xu2024investigating}       & 40.99                                                       & 33.71                                                       & 22.33                                                       & 32.82                                                       & 62.93                                                       & 49.58                                                       & 44.32                                                       & 49.86                                                       & {\color[HTML]{FF0000} \color[HTML]{EA6B66}\textbf{{32.11}}} & {\color[HTML]{67AB9F}\ul{45.99}}                            & {\color[HTML]{67AB9F}\ul{18.95}}                            & {\color[HTML]{67AB9F}\ul{23.05}}                            & 86.46                                                       & 93.13                                                       & 86.65                                                       & 87.43                                                       \\ \midrule
					\rowcolor[HTML]{EFEFEF} 
					{\color[HTML]{333333} Ours}                                             & {\color[HTML]{333333} \color[HTML]{EA6B66}\textbf{{45.84}}} & {\color[HTML]{333333} \color[HTML]{EA6B66}\textbf{{39.66}}} & {\color[HTML]{333333} \color[HTML]{EA6B66}\textbf{{27.34}}} & {\color[HTML]{333333} \color[HTML]{EA6B66}\textbf{{38.03}}} & {\color[HTML]{333333} {\color[HTML]{67AB9F}\ul{66.49}}}     & {\color[HTML]{333333} {\color[HTML]{67AB9F}\ul{53.94}}}     & {\color[HTML]{333333} {\color[HTML]{67AB9F}\ul{49.53}}}     & {\color[HTML]{333333} {\color[HTML]{67AB9F}\ul{54.03}}}     & {\color[HTML]{333333} {\color[HTML]{67AB9F}\ul{31.03}}}     & {\color[HTML]{333333} \color[HTML]{EA6B66}\textbf{{46.56}}} & {\color[HTML]{333333} \color[HTML]{EA6B66}\textbf{{19.33}}} & {\color[HTML]{333333} \color[HTML]{EA6B66}\textbf{{23.22}}} & {\color[HTML]{333333} \color[HTML]{EA6B66}\textbf{{90.43}}} & {\color[HTML]{333333} \color[HTML]{EA6B66}\textbf{{94.48}}} & {\color[HTML]{333333} \color[HTML]{EA6B66}\textbf{{89.20}}} & {\color[HTML]{333333} \color[HTML]{EA6B66}\textbf{{90.21}}} \\ \bottomrule[1.5pt]
				\end{tabular}%
			}
		\end{table*}
		
		\subsubsection{Evaluation Metrics}
		We employ four evaluation metrics, including accuracy (ACC), normalized mutual information (NMI), adjusted Rand index (ARI), and Fscore to measure the clustering performance of different methods. Note that a higher value of each metric indicates a better clustering result.
		
		\subsubsection{Implementation Details}
		In our proposed method, the view-specific encoders and decoders are all implemented in the fully connected architectures. The detailed structure of the encoders and decoders are $D$-196-128-64 and 64-128-196-$D$, respectively, where $D$ is the input and output dimensions of encoders and decoders. The proposed method is implemented via the PyTorch~\cite{paszke2019pytorch} tool and conducts experiments on a single NVIDIA 2080Ti GPU with a Ubuntu 20.04 platform. The Adam optimizer~\cite{kingma2014adam} is utilized to optimize the proposed method, and the initial learning rate is set to 0.001. The model is trained for 3000 epochs on all datasets, and the obtained joint feature representation is used to obtain the clustering results via the K-means. Besides, the trade-off parameter $\lambda$ is searched in range $[10^{-5}, 10^{-4}, ...,10^4, 10^5]$  with a grid search scheme.
		
		\subsection{Clustering Performance Evaluation}
		The clustering results measured by four metrics on eight multi-view datasets of all compared methods are reported in Tab.~\ref{tab:mvc_results}. From this table, we obtain the following observations:
		\begin{itemize}
			\item The proposed BMvC method consistently performs better than other compared methods on most datasets. For instance, BMvC achieves 78.46, 64.46, 58.64, and 65.12 percentages in terms of ACC, NMI, ARI, and Fscore on the OutdoorScene dataset. Such results exceed the second performer (MVCAN) by about 2.57, 1.29, 2.63, and 2.62 percentages measured by ACC, NMI, ARI, and Fscore on the OutdoorScene dataset. Such clustering performance benefits strongly demonstrate the effectiveness and superiority of the proposed BMvC method, which facilitates more balanced multi-view learning through the introduction of view-specific contrastive regularization.
			\item The proposed BMvC method is superior to other deep-based multi-view clustering competitors, which seldom takes the imbalanced learning problem in multi-view clustering into consideration. The MVCAN method assigns different views with weights based on the mutual information between joint clustering indicators and view-specific ones to mitigate the side effects of less robust views. However, this approach may further exacerbate the issues of imbalanced learning, as the information in certain views with lower weight allocations is more difficult to learn.
			\item Compared with other shallow multi-view clustering methods, our BMvC method achieves superior clustering results. We note that the graph-based methods, e.g., AWP, GMC, and UDBGL usually obtain considerable performance. This indicates that effectively leveraging the similarities among different samples is beneficial for clustering tasks.
		\end{itemize}

		\begin{table*}[!htbp]
			\centering
			\caption{The ablation study of the proposed method in terms of ACC, NMI, ARI, and Fscore on eight benchmark multi-view datasets.}
			\label{tab:mvc_ab}
			\resizebox{\textwidth}{!}{%
				\begin{tabular}{@{}ccccccccccccccccc@{}}
					\toprule[1.5pt]
					\multicolumn{1}{c|}{Datasets}    & \multicolumn{4}{c|}{CUB}                      & \multicolumn{4}{c|}{HW}                       & \multicolumn{4}{c|}{Youtube}                  & \multicolumn{4}{c}{OutdoorScene} \\ \midrule
					\multicolumn{1}{c|}{Methods}     & ACC   & NMI   & ARI   & \multicolumn{1}{c|}{Fscore} & ACC   & NMI   & ARI   & \multicolumn{1}{c|}{Fscore} & ACC   & NMI   & ARI   & \multicolumn{1}{c|}{Fscore} & ACC     & NMI     & ARI     & Fscore   \\ \midrule
					\multicolumn{17}{l}{(a) Effectiveness of View-specific Contrasitve Regularization}                                                                                                                                                          \\ \midrule
					\multicolumn{1}{c|}{Rec}    & 78.54 & 77.43 & 65.44 & \multicolumn{1}{c|}{71.99}  & 90.77 & 83.89 & 81.40 & \multicolumn{1}{c|}{83.55}  & 27.48 & 21.29 & 10.93 & \multicolumn{1}{c|}{24.80}  & 72.06   & 56.98   & 49.12   & 56.95    \\
					\multicolumn{1}{c|}{VCR}    & 83.81 & 80.70 & 72.86 & \multicolumn{1}{c|}{76.10}  & 97.34 & 94.25 & 94.13 & \multicolumn{1}{c|}{94.87}  & 42.91 & 30.84 & 22.81 & \multicolumn{1}{c|}{32.15}  & 73.49   & 60.08   & 52.36   & 59.33    \\
					\multicolumn{1}{c|}{Rec + VCR}   & 86.67 & 82.21 & 75.53 & \multicolumn{1}{c|}{78.44}  & 97.95 & 95.18 & 95.46 & \multicolumn{1}{c|}{96.01}  & 47.67 & 34.25 & 26.23 & \multicolumn{1}{c|}{34.70}  & 78.46   & 64.46   & 58.64   & 65.12    \\ \midrule
					\multicolumn{17}{l}{(b) Impats of Diverse Feature Fusion manners}                                                                                                                                                                            \\ \midrule
					\multicolumn{1}{c|}{BMvC-w-ASum} & 89.30 & 85.57 & 80.19 & \multicolumn{1}{c|}{82.49}  & 97.47 & 94.41 & 94.39 & \multicolumn{1}{c|}{95.13}  & 47.01 & 32.71 & 25.14 & \multicolumn{1}{c|}{33.43}  & 75.87   & 62.23   & 55.77   & 62.15    \\
					\multicolumn{1}{c|}{BMvC-w-WSum} & 88.32 & 84.90 & 79.07 & \multicolumn{1}{c|}{81.52}  & 97.51 & 94.62 & 94.50 & \multicolumn{1}{c|}{95.22}  & 46.14 & 32.09 & 24.88 & \multicolumn{1}{c|}{32.91}  & 78.34   & 64.01   & 58.49   & 64.75    \\
					\multicolumn{1}{c|}{Ours-w-Cat}  & 86.67 & 82.21 & 75.53 & \multicolumn{1}{c|}{78.44}  & 97.95 & 95.18 & 95.46 & \multicolumn{1}{c|}{96.01}  & 47.67 & 34.25 & 26.23 & \multicolumn{1}{c|}{34.70}  & 78.46   & 64.46   & 58.64   & 65.12    \\ \bottomrule
					
					\addlinespace[0.3em]
					
					\toprule
					\multicolumn{1}{c|}{Datasets}    & \multicolumn{4}{c|}{RGB-D}                                                                                & \multicolumn{4}{c|}{nuswide}                                                                      & \multicolumn{4}{c|}{xrmb}                                                                         & \multicolumn{4}{c}{xmedia}                                                                       \\ \midrule
					\multicolumn{1}{c|}{Methods}     & \multicolumn{1}{c}{ACC} & \multicolumn{1}{c}{NMI} & \multicolumn{1}{c}{ARI} & \multicolumn{1}{c|}{Fscore} & \multicolumn{1}{c}{ACC} & \multicolumn{1}{c}{NMI} & \multicolumn{1}{c}{ARI} & \multicolumn{1}{c|}{Fscore} & \multicolumn{1}{c}{ACC} & \multicolumn{1}{c}{NMI} & \multicolumn{1}{c}{ARI} & \multicolumn{1}{c|}{Fscore} & \multicolumn{1}{c}{ACC} & \multicolumn{1}{c}{NMI} & \multicolumn{1}{c}{ARI} & \multicolumn{1}{c}{Fscore} \\ \midrule
					\multicolumn{17}{l}{(a) Effectiveness   of View-specific Contrasitve Regularization}                                                                                                                                                                                                                                                                                                                                                                                            \\ \midrule
					\multicolumn{1}{c|}{Rec}    & 38.34                   & 33.91                   & 20.99                   & \multicolumn{1}{l|}{31.67}  & 59.63                   & 47.69                   & 39.75                   & \multicolumn{1}{l|}{46.95}  & 29.30                   & 44.17                   & 17.25                   & \multicolumn{1}{l|}{21.45}  & 85.21                   & 93.39                   & 85.94                   & 87.35                      \\
					\multicolumn{1}{c|}{VCR}    & 43.01                   & 40.61                   & 26.07                   & \multicolumn{1}{l|}{37.50}  & 68.06                   & 54.49                   & 50.78                   & \multicolumn{1}{l|}{55.16}  & 28.19                   & 44.99                   & 17.71                   & \multicolumn{1}{l|}{21.70}  & 87.74                   & 94.75                   & 88.40                   & 89.69                      \\
					\multicolumn{1}{c|}{Rec + VCR}   & 45.84                   & 39.66                   & 27.34                   & \multicolumn{1}{l|}{38.03}  & 66.49                   & 53.94                   & 49.53                   & \multicolumn{1}{l|}{54.03}  & 31.03                   & 46.56                   & 19.33                   & \multicolumn{1}{l|}{23.22}  & 90.43                   & 94.48                   & 89.20                   & 90.21                      \\ \midrule
					\multicolumn{17}{l}{(b) Impats of   Diverse Feature Fusion manners}                                                                                                                                                                                                                                                                                                                                                                                                              \\ \midrule
					\multicolumn{1}{c|}{BMvC-w-ASum} & 45.28                   & 38.98                   & 26.85                   & \multicolumn{1}{l|}{37.87}  & 67.44                   & 54.19                   & 49.91                   & \multicolumn{1}{l|}{54.61}  & 30.79                   & 46.32                   & 19.17                   & \multicolumn{1}{l|}{23.26}  & 90.97                   & 95.17                   & 90.16                   & 91.17                      \\
					\multicolumn{1}{c|}{BMvC-w-WSum} & 44.37                   & 41.98                   & 27.37                   & \multicolumn{1}{l|}{39.03}  & 66.57                   & 53.91                   & 48.21                   & \multicolumn{1}{l|}{54.08}  & 30.71                   & 45.74                   & 18.86                   & \multicolumn{1}{l|}{22.78}  & 87.15                   & 94.24                   & 87.91                   & 88.99                      \\
					\multicolumn{1}{c|}{Ours-w-Cat}  & 45.84                   & 39.66                   & 27.34                   & \multicolumn{1}{l|}{38.03}  & 66.49                   & 53.94                   & 49.53                   & \multicolumn{1}{l|}{54.03}  & 31.03                   & 46.56                   & 19.33                   & \multicolumn{1}{l|}{23.22}  & 90.43                   & 94.48                   & 89.20                   & 90.21                      \\ \bottomrule[1.5pt]
					
				\end{tabular}%
			}
		\end{table*}

		\begin{figure*}[!htbp]
			\centering
			\subfigure[CUB]{
				\includegraphics[width=0.23\textwidth]{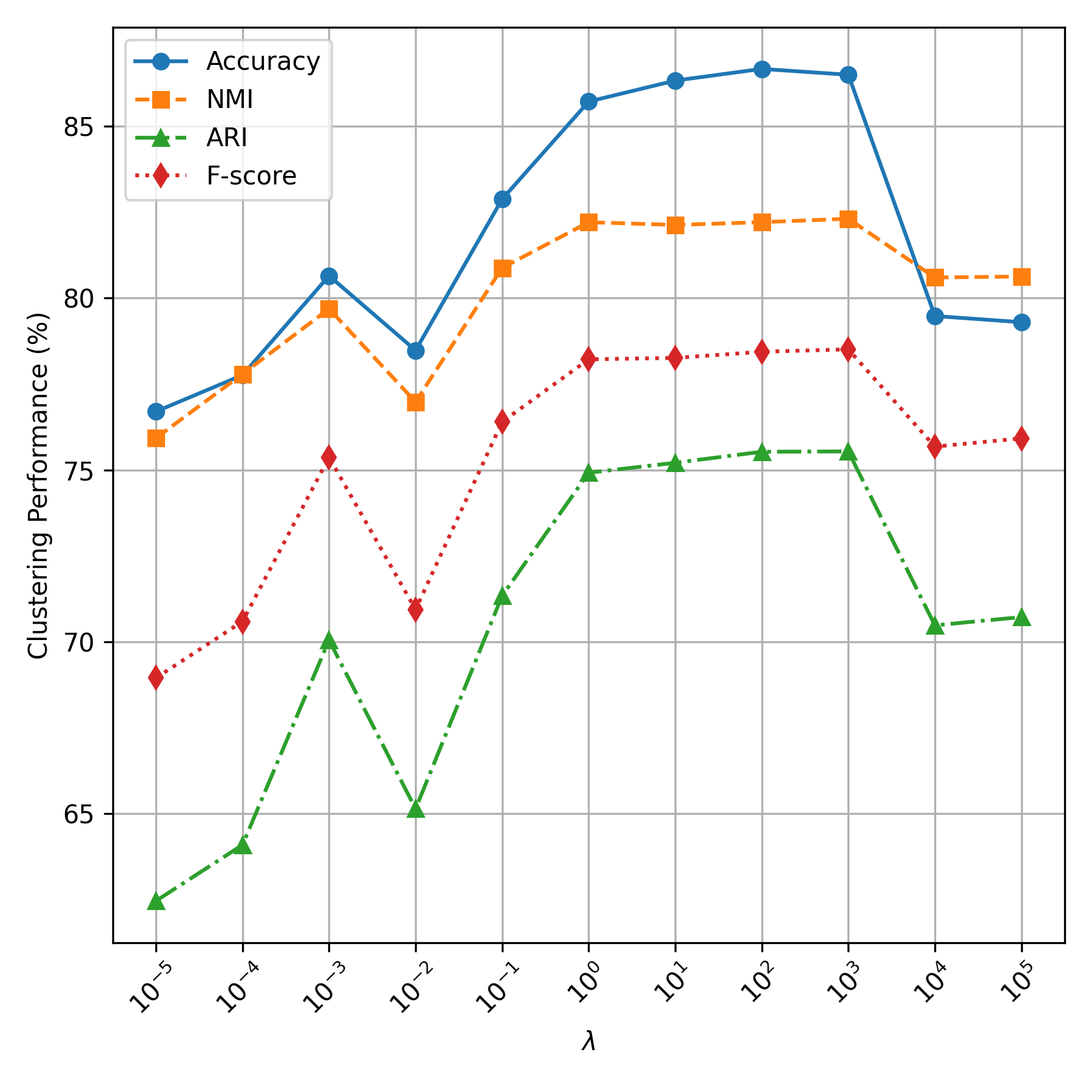}\label{fig:CUB}}
			\subfigure[HW]{
				\includegraphics[width=0.23\textwidth]{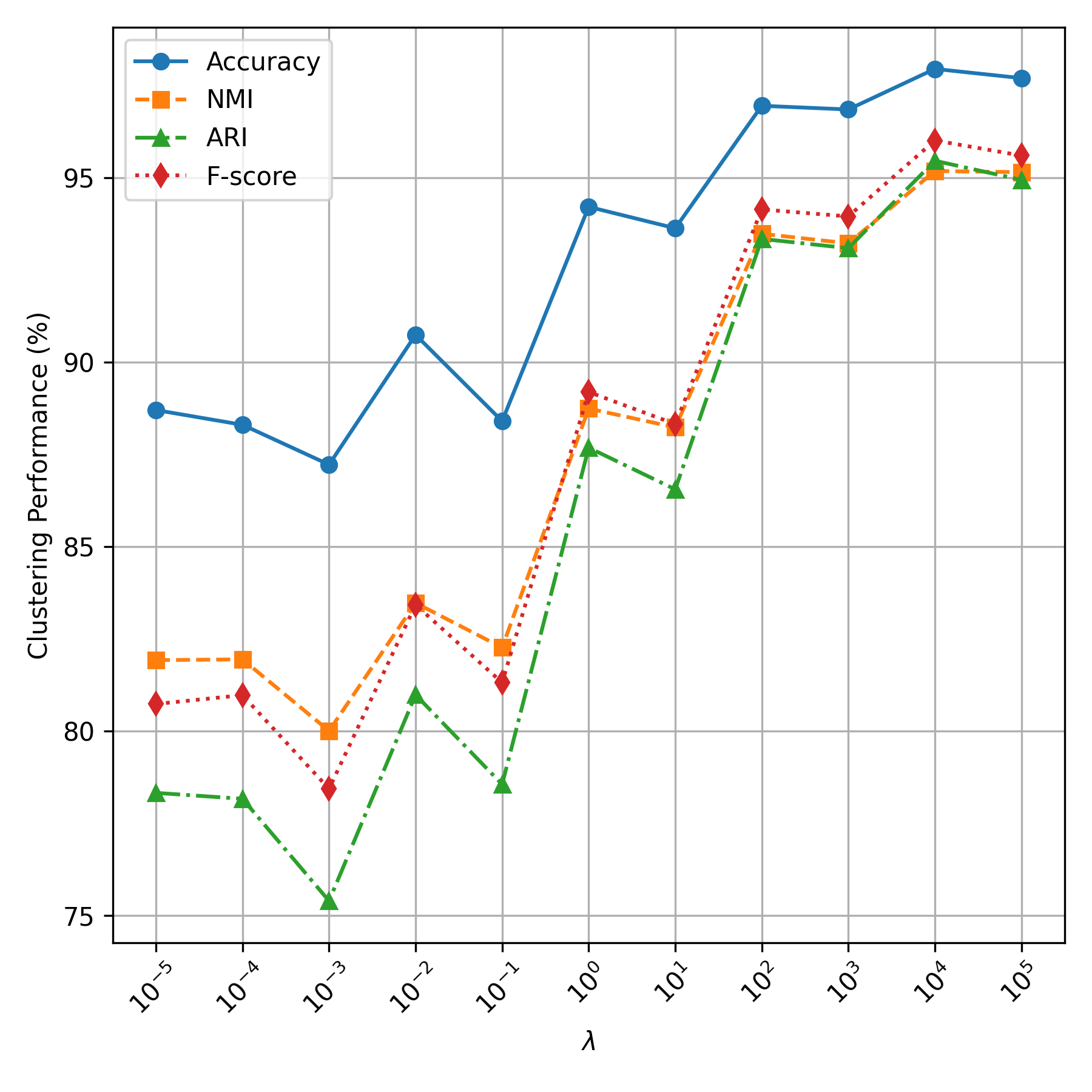}\label{fig:HW}}
			\subfigure[Youtube]{
				\includegraphics[width=0.23\textwidth]{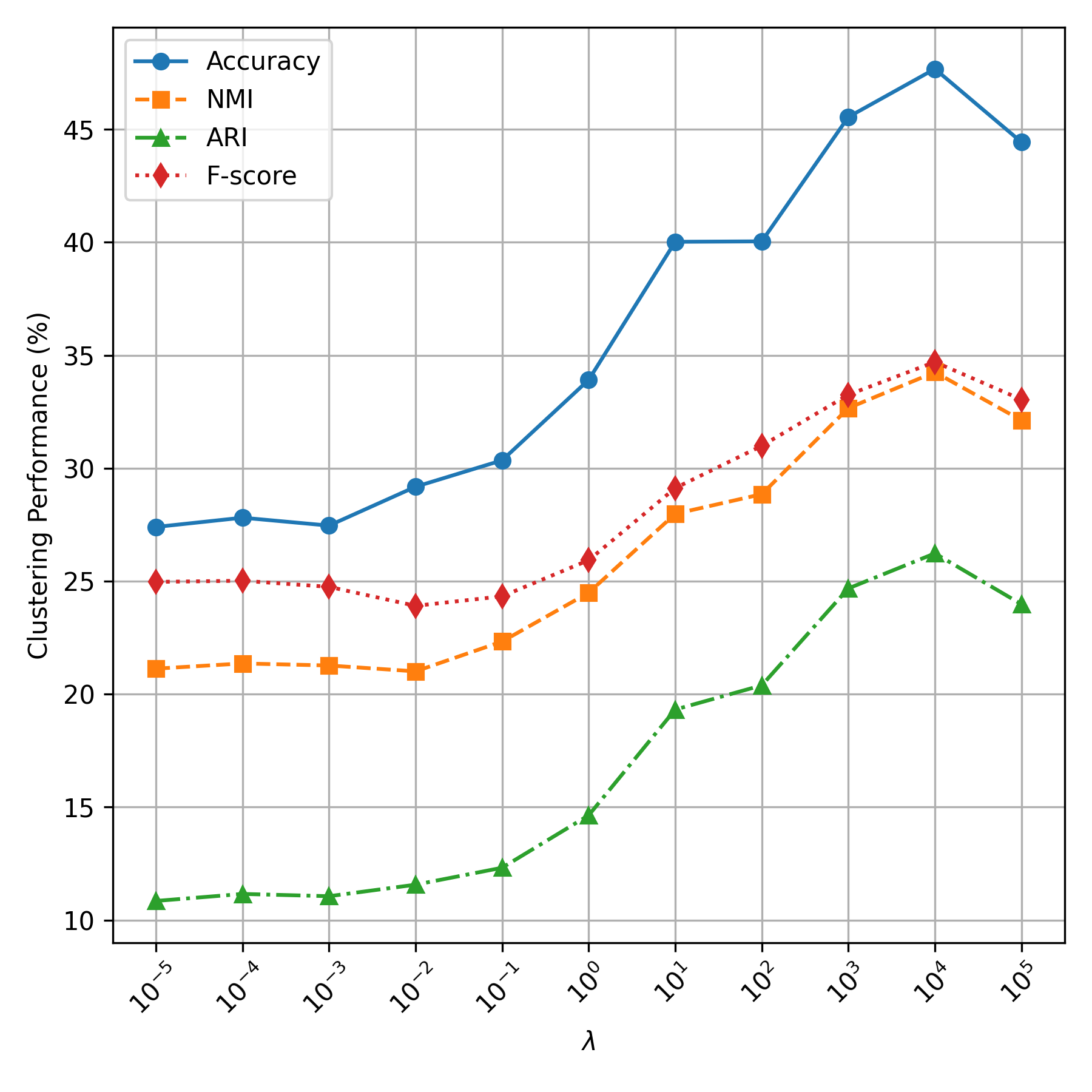}\label{fig:Youtube}}
			\subfigure[ORL]{
				\includegraphics[width=0.23\textwidth]{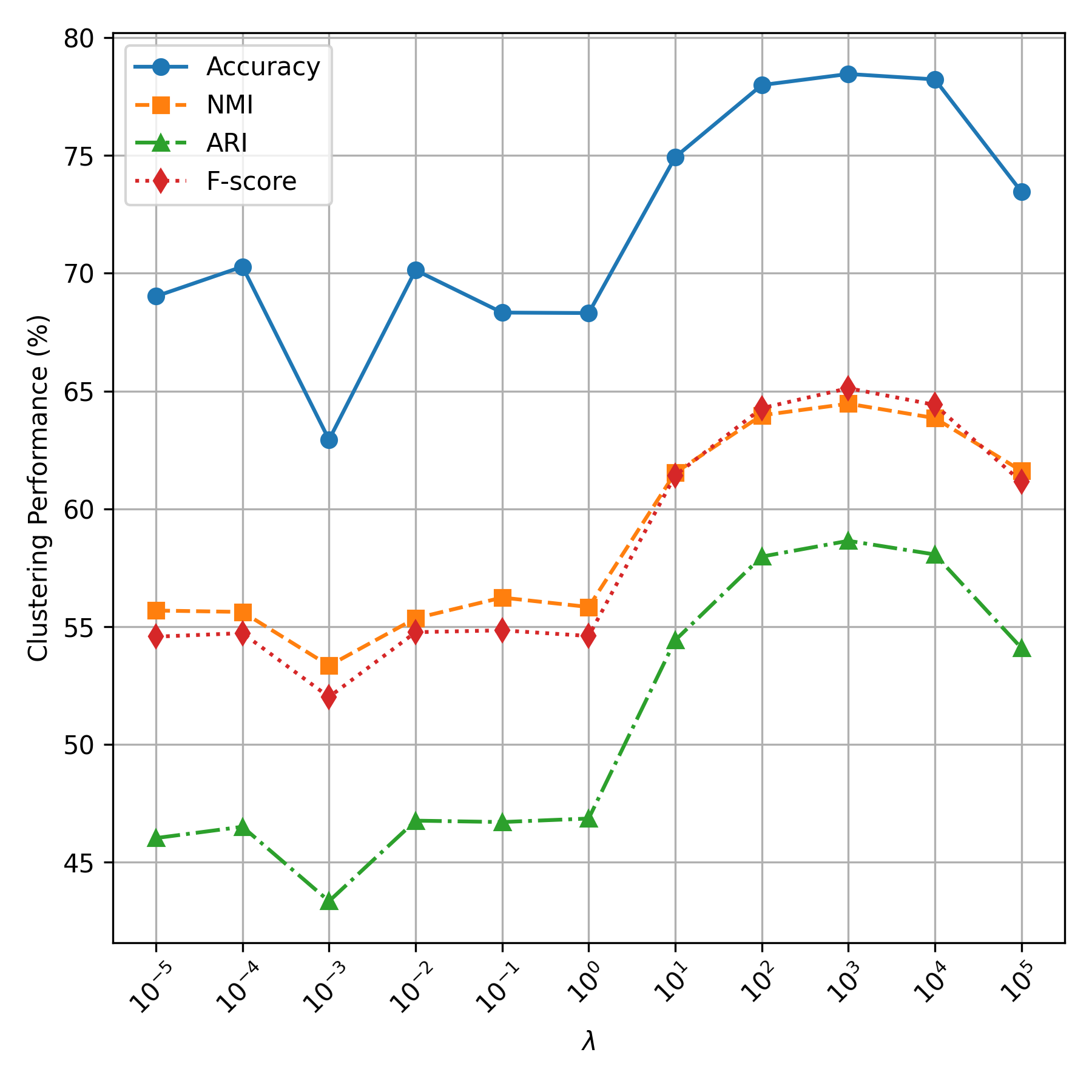}\label{fig:ORL}}
			\subfigure[RGB-D]{
				\includegraphics[width=0.23\textwidth]{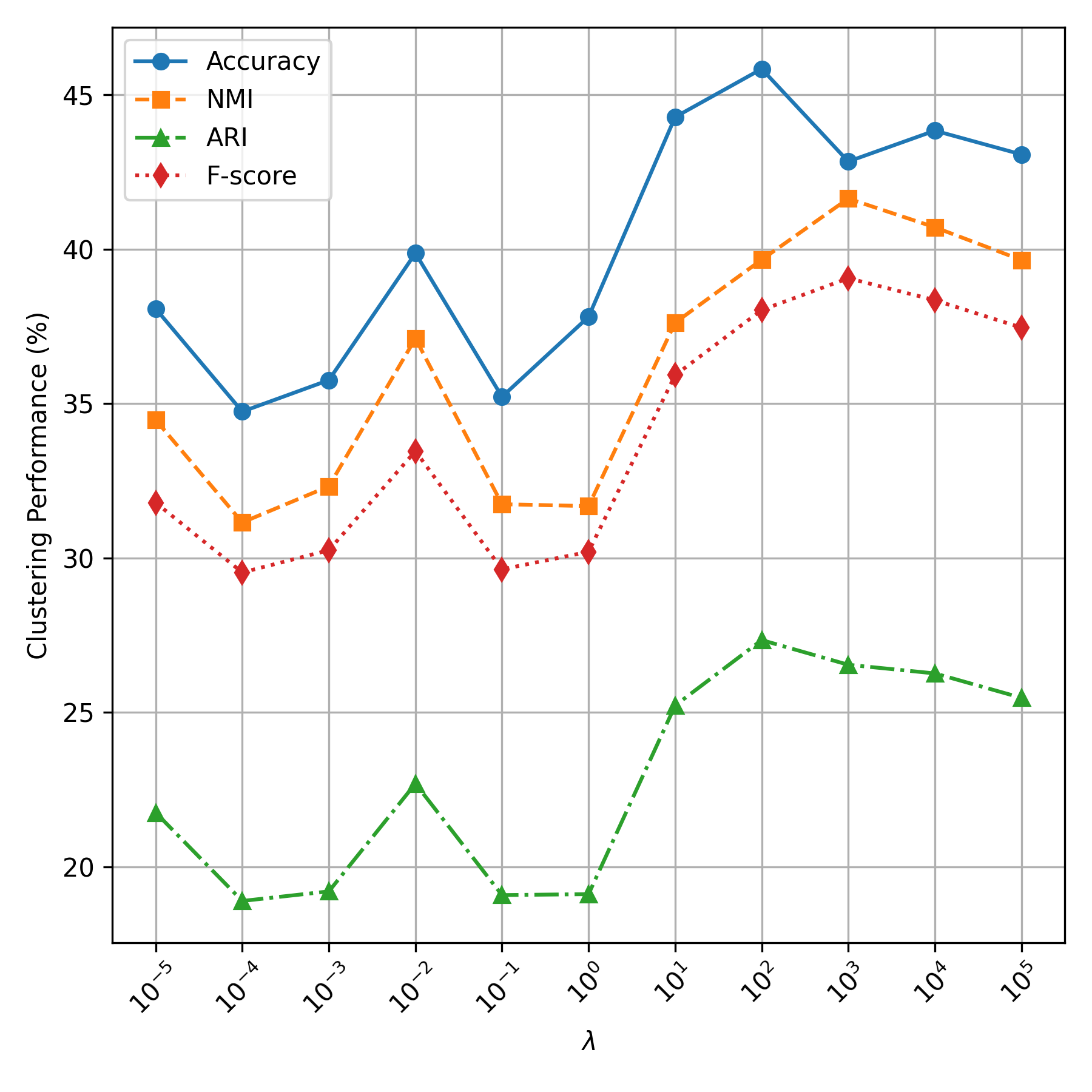}\label{fig:RGB-D}}
			\subfigure[nuswide]{
				\includegraphics[width=0.23\textwidth]{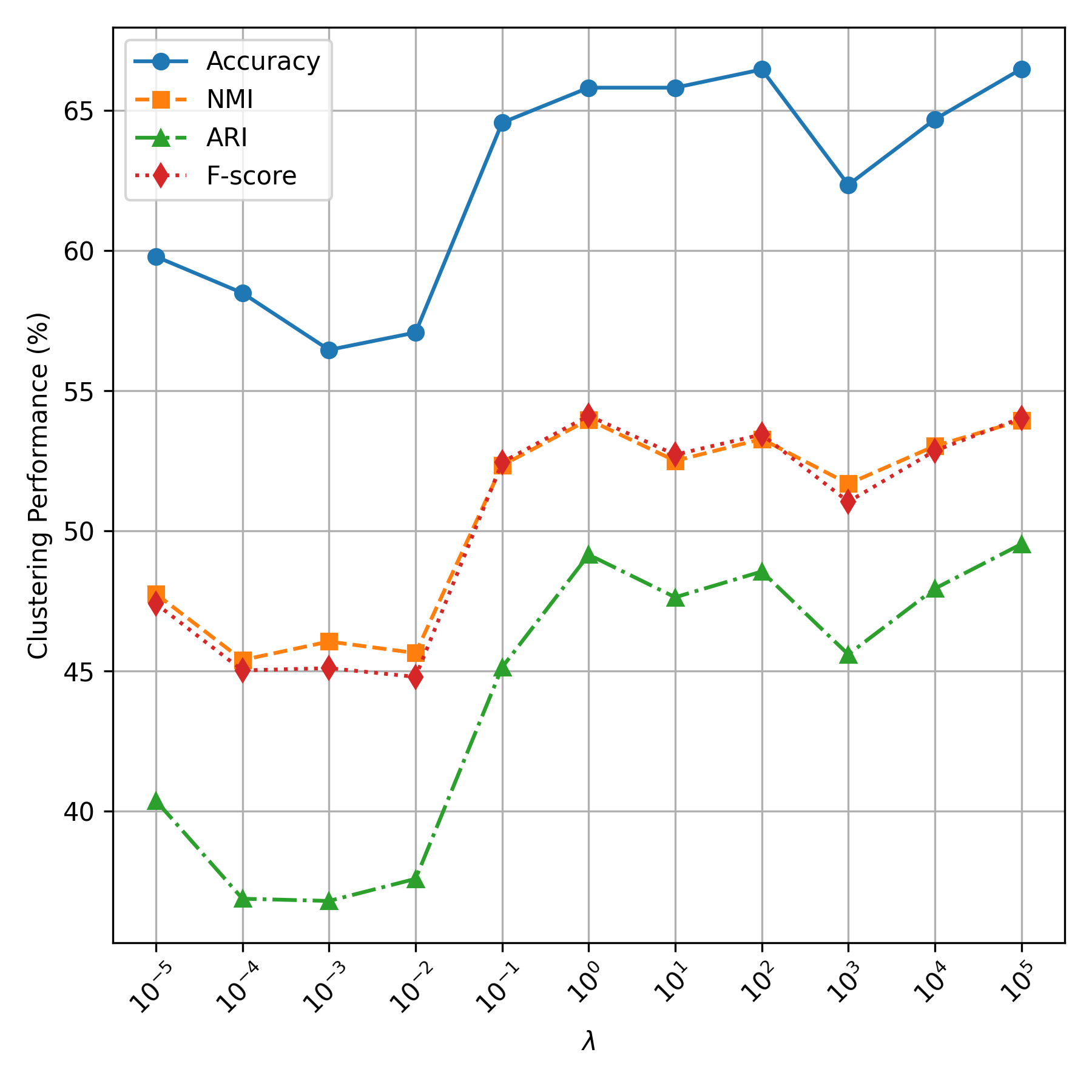}\label{fig:nuswide}}
			\subfigure[xrmb]{
				\includegraphics[width=0.23\textwidth]{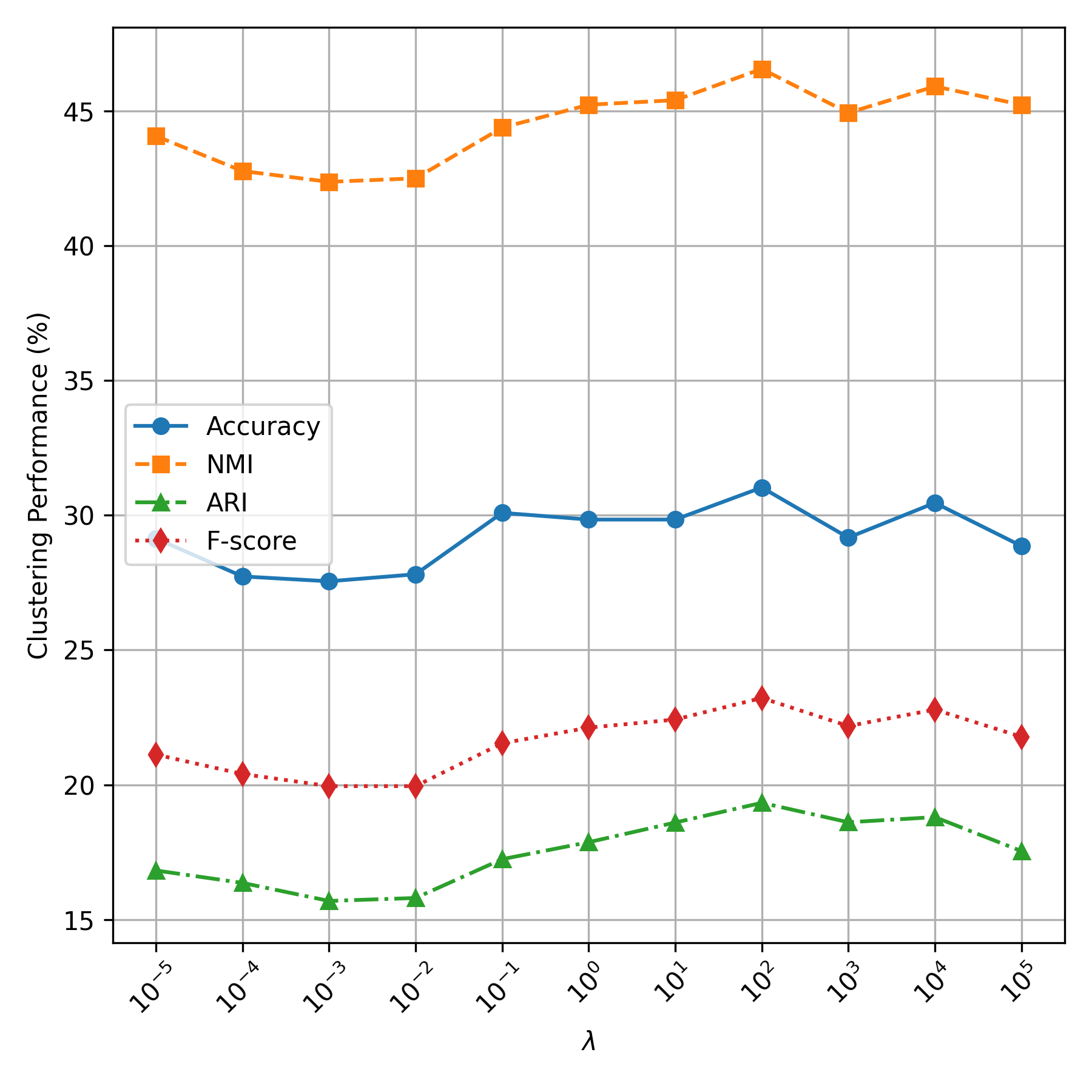}\label{fig:xrmb}}
			\subfigure[xmedia]{
				\includegraphics[width=0.23\textwidth]{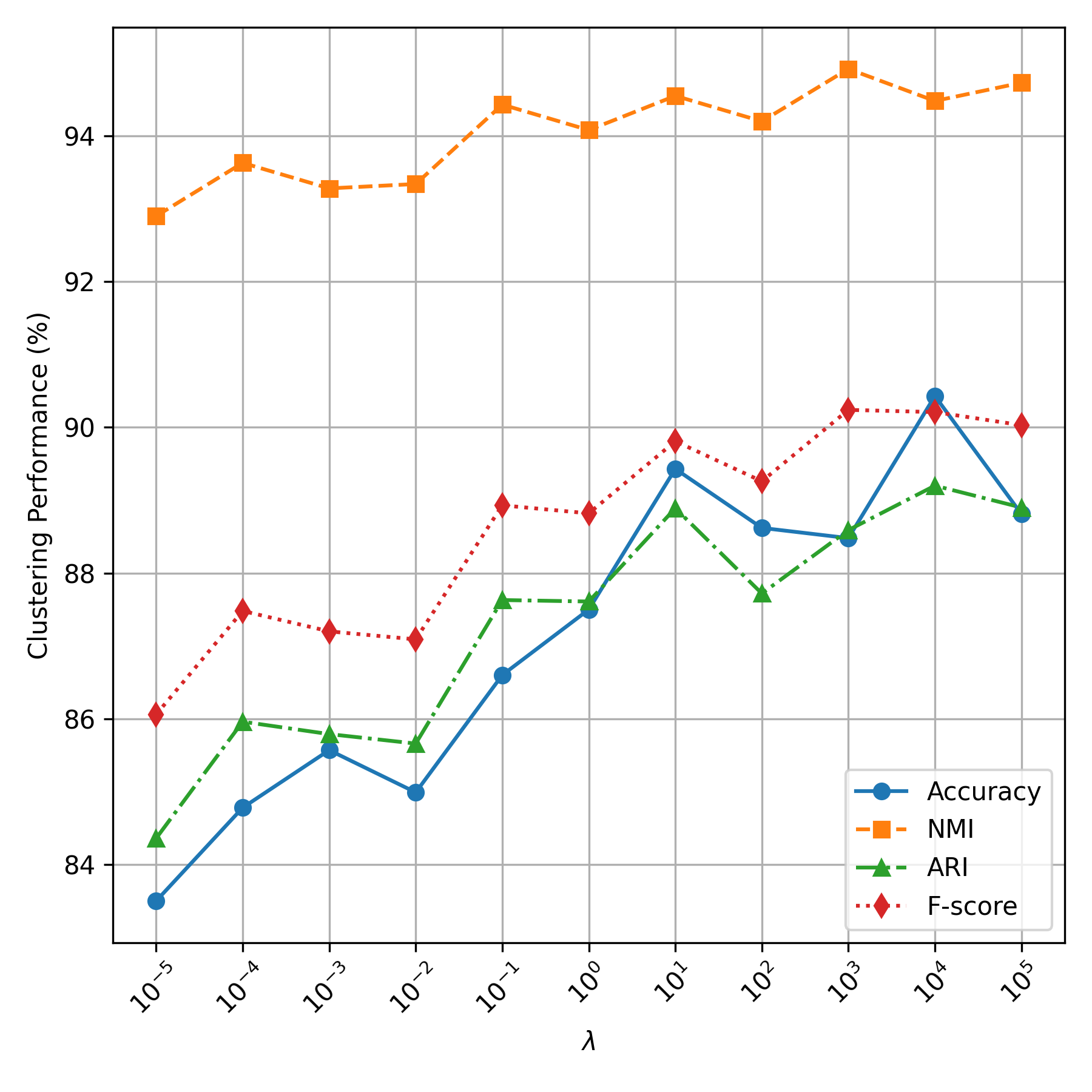}\label{fig:xmedia}}
			\caption{The parameter sensitivity of the proposed method on eight benchmark multi-view datasets in terms of ACC, NMI, ARI, and Fscore, respectively.}
			\label{fig:ps}
		\end{figure*}
		
		\subsection{Ablation Studies}
		In this section, we conduct a series of experiments on the eight multi-view datasets to analyze the effectiveness of diverse components in the proposed method. 
		
		\subsubsection{Effectiveness of View-specific Contrastive Regularization}
		The view-specific contrastive regularization, which leverages the similarities captured from original features and joint features to perform contrastive learning on the view-specific clustering space, ensuring additional gradients for the parameters updating in each view-specific encoder, is the key component of the proposed method. To verify its effectiveness, we formulate three methods here, i.e., Rec, VCR, and Rec + VCR. For the Rec method, the view-specific contrastive regularization is removed from the BMvC method and only the feature reconstruction loss is utilized to optimize the model. VCR method cut out the view-specific decoder parts and just leverages the view-specific contrastive regularization loss to optimize the model. Rec+VCR is set as same as our proposed BMvC methods, in which the feature reconstruction and view-specific contrastive regularization are jointly exploited. The clustering performance in terms of ACC, NMI, ARI, and Fscore on eight multi-view datasets of the above formulated methods is reported in Tab.~\ref{tab:mvc_ab}. From the results, we find that 1) The VCR method performs better than Rec in most cases, which indicates the VCR part can obtain more balanced multi-view learning during training, ensuring that the complementary information of multi-view datasets can be more explored. 2) Rec + VCR is consistently superior to Rec and VCR, demonstrating jointly utilizing Rec and VCR is the best choice for multi-view clustering.
		
		\subsubsection{Impacts of Different Feature Fusion manners}
		Different multi-view feature fusion manners may achieve diverse impacts on the final multi-view clustering model. To study this, we construct three methods, i.e., BMvC-w-Asum, BMvC-w-Wsum, and BMvC-w-Cat, which fuse multi-view features via average feature addition, feature addition with sample-wise weights, and feature concatenation, respectively. As can be seen from the results in Tab.~\ref{tab:mvc_ab}, our BMvC method equipped with average feature addition, feature addition with sample-wise weights, and feature concatenation can both obtain considerable clustering performance. This indicates that our proposed BMvC method is less sensitive to different multi-view feature fusion manners.
		
		\subsubsection{Parameter Sensitivity Analysis}
		Our proposed BMvC method consists of a key balance parameter $\lambda$ to trade-off the feature reconstruction and view-specific contrastive regularization loss. To study the parameter sensitivity, we give the clustering performance measured by ACC, NMI, ARI, and Fscore on eight multi-view datasets varying with different $\lambda$ in Fig. \ref{fig:ps}. From the results, we observe that the clustering performance of the BMvC method slightly fluctuates with the $\lambda$. Additionally, when the parameter $\lambda$ is in range $[10^0, 10^1, 10^2, 10^3]$, our BMvC method can obtain considerable clustering results in most datasets. Therefore, we suggest to set the parameter $\lambda$ in range $[10^0, 10^1, 10^2, 10^3]$, when it is applied for some applications.
		
		\section{Conclusions}
		In this paper, we proposed a novel balanced multi-view clustering method to achieve more balanced multi-view learning and further improve the clustering performance. We first analyzed the imbalanced multi-view clustering problem existing in the joint-training paradigm from the gradient view. Then, we introduced a view-specific contrastive regularization to make a better trade-off between the exploitation of view-specific patterns and the exploration of view-invariance patterns to fully learn the multi-view information for the clustering task. Additionally, the theoretical analysis was provided to verify the effectiveness of the VCR from the gradient perspective. Finally, Extensive experiments on various benchmark multi-view clustering datasets were conducted to verify the efficacy of our method. 
		
		\ifCLASSOPTIONcompsoc
		\section*{Acknowledgments}
		\else
		\section*{Acknowledgment}
		\fi

		The authors wish to gratefully acknowledge the anonymous reviewers for the constructive comments of this paper.

		\ifCLASSOPTIONcaptionsoff
		\newpage
		\fi
		
		\bibliographystyle{IEEEtran}
		\bibliography{IEEEabrv,References}

	\end{document}